\definecolor{BrightBlue}{RGB}{65, 145, 225}
\definecolor{figure_blue}{RGB}{53, 132, 187}
\definecolor{figure_orange}{RGB}{255, 139, 38}
\definecolor{figure_green}{RGB}{65, 169, 65}
\definecolor{figure_red}{RGB}{218, 60, 61}
\definecolor{figure_purple}{RGB}{158, 118, 195}
\definecolor{figure_brown}{RGB}{160, 82, 45}
\definecolor{figure_pink}{RGB}{255,105,180}
\newcommand\footnoteref[1]{\protected@xdef\@thefnmark{\ref{#1}}\@footnotemark}
\newtheorem{theorem}{Theorem}
\newtheorem{lemma}{Lemma}
\newtheorem{proposition}{Proposition}
\title{Calibration Meets Reality: Making Machine Learning Predictions Trustworthy}
\author{
    Kristina P. Sinaga\thanks{Corresponding author: Independent Researcher. ORCID: 0009-0000-6184-829X. Email: sinagakristinap@gmail.com. Research interests include federated learning, multi-view clustering, privacy-preserving machine learning, heat kernel methods, quantum field theory applications in machine learning, and distributed intelligent systems.}\\
    \small Independent Researcher\\
    \small \texttt{sinagakristinap@gmail.com}
    \and
    Arjun S. Nair\thanks{Contributing author: Implementation specialist. Email: 5minutepodcastforyou@gmail.com. Contributed to experimental implementation and code development.}\\
    \small Independent Researcher\\
    \small \texttt{5minutepodcastforyou@gmail.com}
}
\date{September 22, 2025}
\begin{document}

\maketitle

\begin{abstract}
Post-hoc calibration methods are widely used to improve the reliability of probabilistic predictions from machine learning models. Despite their prevalence, a comprehensive theoretical understanding of these methods remains elusive, particularly regarding their performance across different datasets and model architectures. Input features play a crucial role in shaping model predictions and, consequently, their calibration. However, the interplay between feature quality and calibration performance has not been thoroughly investigated. In this work, we presents a rigorous theoretical analysis of post-hoc calibration methods, focusing on Platt scaling and isotonic regression. We derive convergence guarantees, computational complexity bounds, and finite-sample performance metrics for these methods. Furthermore, we explore the impact of feature informativeness on calibration performance through controlled synthetic experiments. Our empirical evaluation spans a diverse set of real-world datasets and model architectures, demonstrating consistent improvements in calibration metrics across various scenarios. By examining calibration performance under varying feature conditions—utilizing only informative features versus complete feature spaces including noise dimensions—we provide fundamental insights into the robustness and reliability of different calibration approaches. Our findings offer practical guidelines for selecting appropriate calibration methods based on dataset characteristics and computational constraints, bridging the gap between theoretical understanding and practical implementation in uncertainty quantification. Code and experimental data are available at: \url{https://github.com/Ajwebdevs/calibration-analysis-experiments}.

\textbf{Keywords:} Post-hoc calibration, Platt scaling, Isotonic regression, Calibration theory, Feature informativeness, Uncertainty quantification, Machine learning reliability
\end{abstract}

% ---------------SECTION 1------------------------------%

\section{Introduction} \label{sec:Introduction}

Modern machine learning systems are increasingly deployed in critical decision-making scenarios where understanding prediction uncertainty is as important as the predictions themselves. In binary classification tasks, models output probability estimates that ideally reflect the true likelihood of positive class membership. However, many machine learning algorithms produce poorly calibrated probability estimates, leading to overconfident or underconfident predictions that can mislead decision-makers and compromise system reliability.

\textbf{Calibration} refers to the agreement between predicted probabilities and observed frequencies of outcomes \cite{dawid1982well}. Formally, a perfectly calibrated model ensures that among all instances assigned probability $p$, approximately $p$ fraction are positive examples. This fundamental property is crucial for applications requiring reliable uncertainty quantification, including risk assessment, cost-sensitive decisions, and probabilistic reasoning systems.

The implications of poor calibration extend far beyond academic metrics. In medical diagnosis systems, Rousseau \emph{et al.} demonstrated that after calibration, the model's predictions of disease probability improved significantly \cite{rousseau2025post}. Proving that underconfident models may fail to identify high-risk patients, resulting in delayed diagnoses and suboptimal care. In financial services, miscalibrated credit risk models can lead to systematic lending biases, regulatory non-compliance, and substantial economic losses \cite{gordy2000comparative}. These real-world consequences underscore the critical importance of reliable probability calibration for trustworthy AI deployment.

Recent advances in deep learning have achieved remarkable predictive accuracy across diverse domains, yet these sophisticated models often exhibit poor calibration properties. Guo \emph{et al.} \cite{guo2017calibration} demonstrated that modern neural networks, despite superior classification performance, produce increasingly overconfident predictions compared to simpler methods. This calibration-accuracy trade-off presents a fundamental challenge for deploying high-performance models in safety-critical applications.

The relationship between feature quality and calibration performance represents an underexplored dimension of uncertainty quantification. Input features can significantly influence model confidence through their informativeness and relevance to the classification task. Informative features that strongly correlate with true class membership enable well-calibrated confidence estimates, while redundant or noisy features may introduce systematic biases in probability estimates \cite{yang2019feature}. Understanding this feature-calibration relationship is particularly crucial in high-dimensional settings where many features may be irrelevant, potentially degrading both predictive performance and calibration quality.

This work addresses the critical gap between theoretical understanding and practical implementation of post-hoc calibration methods for binary classification. We focus on two widely-adopted techniques: Platt scaling \cite{platt1999probabilistic}, which applies parametric sigmoid transformation to classifier outputs, and isotonic regression \cite{zadrozny2001learning}, which employs non-parametric monotonic mapping to achieve calibration. Our comprehensive investigation encompasses theoretical analysis, computational complexity characterization, and extensive empirical evaluation across synthetic and real-world datasets.

\textbf{Our key contributions include:} (1) rigorous theoretical analysis of post-hoc calibration methods with convergence guarantees and complexity bounds, (2) novel investigation of feature informativeness effects on calibration performance through controlled synthetic experiments, (3) comprehensive empirical evaluation demonstrating consistent improvements across diverse algorithmic paradigms, and (4) practical guidelines for calibration method selection based on dataset characteristics and computational constraints. By examining calibration performance under varying feature conditions—utilizing only informative features versus complete feature spaces including noise dimensions—we provide fundamental insights into the robustness and reliability of different calibration approaches.

The rest of the paper is organized as follows. Section \ref{sec:RelatedWork} reviews related work and theoretical foundations of calibration. Section \ref{sec:TheoreticalAnalysis} presents our theoretical analysis of Platt scaling and isotonic regression. Section \ref{sec:Evaluation} discusses empirical results on synthetic and real-world data. Section \ref{sec:BroaderImpact} explores the broader impact and ethical considerations of our work. Section \ref{sec:CodeandDataAvailability} discusses the availability of code and data used in our experiments. Finally, Section \ref{sec:ConclusionandFutureWork} outlines key findings and future research directions.

% ---------------SECTION 2------------------------------%

\section{Related Work and Background} \label{sec:RelatedWork}

\subsection{Theoretical Foundations of Calibration}

Let $\mathcal{X}$ denote the input space and $\mathcal{Y} = \{0, 1\}$ the binary output space. A binary classifier $f: \mathcal{X} \rightarrow [0, 1]$ maps inputs to probability estimates for the positive class. The classifier is said to be \textbf{perfectly calibrated} if:
\begin{equation}
\mathbb{P}(Y = 1 | f(X) = p) = p \quad \forall p \in [0, 1]
\end{equation}

This fundamental definition, first formalized by Dawid \cite{dawid1982well}, establishes the theoretical basis for calibration assessment. Recent work by Vaicenavicius \emph{et al.} \cite{vaicenavicius2019evaluating} has extended this framework to provide statistical tests for calibration, while Kumar \emph{et al.} \cite{kumar2019verified} introduced verified uncertainty calibration with theoretical guarantees.

\subsection{Modern Calibration Assessment Metrics}

Beyond the classical Brier score, several sophisticated metrics have emerged for calibration assessment. The \textbf{Expected Calibration Error (ECE)} \cite{naeini2015obtaining} provides a more interpretable measure:
\begin{equation}
\text{ECE} = \sum_{m=1}^{M} \frac{|B_m|}{n} |\text{acc}(B_m) - \text{conf}(B_m)|
\end{equation}

where $B_m$ represents the $m$-th calibration bin, $\text{acc}(B_m)$ is the accuracy within the bin, and $\text{conf}(B_m)$ is the average confidence. Recent work by Nixon \emph{et al.} \cite{nixon2019measuring} introduced the Maximum Calibration Error (MCE) and showed that optimizing for ECE can lead to degenerate solutions. 

Glenn \cite{glenn1950verification} originally proposed the Brier score, which decomposes into reliability, resolution, and uncertainty components, providing a nuanced view of calibration performance. Brier score (BS) is defined as:
\begin{equation}
\text{BS} = \frac{1}{n} \sum_{i=1}^{n} (f(x_i) - y_i)^2
\end{equation}
where $f(x_i)$ is the predicted probability for instance $i$ and $y_i$ is the true label. Brier score ranges from 0 (perfect calibration) to 1 (worst calibration). On the other hand, the Area Under Curve (AUC) \cite{hanley1982meaning} evaluates the model's ability to discriminate between classes, with higher values indicating better performance. Mathematically, AUC is defined as:
\begin{equation}
\text{AUC} = \frac{1}{n_+ n_-} \sum_{i:y_i=1} \sum_{j:y_j=0} \mathbb{I}(f(x_i) > f(x_j))
\end{equation}
where $n_+$ and $n_-$ are the number of positive and negative instances, respectively, and $\mathbb{I}$ is the indicator function.

The \textbf{Reliability Diagram} remains the gold standard for visualizing calibration performance, plotting predicted probability against observed frequency.  Degroot \& Fienberg \cite{degroot1983comparison} first introduced this concept, which has been refined by Zadrozny \& Elkan \cite{zadrozny2001obtaining} and more recently by Guo \emph{et al.} \cite{guo2017calibration} in their influential study of neural network calibration.

\subsection{Deep Learning Calibration: Recent Advances}

The seminal work by Guo \emph{et al.} \cite{guo2017calibration} revealed that modern neural networks, despite achieving high accuracy, are poorly calibrated. This sparked intensive research into calibration methods specifically designed for deep learning models:

\begin{itemize}
    \item \textbf{Temperature Scaling:} by Guo \emph{et al.} \cite{guo2017calibration} introduced this simple yet effective method that learns a single temperature parameter $T$ to rescale logits: $\hat{q}_i = \max_k \sigma(\mathbf{z}_i/T)_k$
    
    \item \textbf{Mixup Training:} Thulasidasan \emph{et al.} \cite{thulasidasan2019mixup} showed that mixup regularization during training significantly improves calibration without requiring post-hoc methods
    
    \item \textbf{Ensemble Methods:} Lakshminarayanan \emph{et al.} \cite{lakshminarayanan2017simple} demonstrated that deep ensembles provide both improved accuracy and better-calibrated uncertainty estimates
    
    \item \textbf{Bayesian Approaches:} Ovadia \emph{et al.} \cite{ovadia2019can} conducted a comprehensive study comparing various Bayesian deep learning methods for uncertainty quantification
\end{itemize}

\subsection{Post-hoc Calibration Methods: Classical and Modern}

Post-hoc calibration methods adjust predictions after training without modifying the underlying model. Classical approaches include:

\begin{itemize}
    \item \textbf{Platt Scaling:} Originally proposed by Platt \cite{platt1999probabilistic} for SVMs, this method fits a sigmoid function to map classifier outputs to calibrated probabilities
    
    \item \textbf{Isotonic Regression:} Zadrozny \& Elkan \cite{zadrozny2001learning} introduced this non-parametric approach that assumes only monotonicity in the calibration mapping
\end{itemize}

Recent extensions include Beta Calibration \cite{kull2017beta}, Histogram Binning \cite{zadrozny2001obtaining}, and Bayesian Binning into Quantiles (BBQ) \cite{naeini2015obtaining}. In 2001, Zadrozny \& Elkan \cite{zadrozny2001obtaining} proposed histogram binning by developing simple binning approaches that remain competitive despite their simplicity. In 2015, Naeini \emph{et al.} \cite{naeini2015obtaining} introduced a Bayesian approach to histogram binning that automatically determines optimal bin boundaries. In 2017, Kull \emph{et al.} \cite{kull2017beta} proposed beta calibration using beta distributions to model calibration mappings, particularly effective for extreme probability values.  

Domain-specific calibration research has emerged across various applications including medical diagnosis, natural language processing (NLP), computer vision, and reinforcement learning (RL). In 2012, Jiang \emph{et al.} \cite{jiang2012} studied calibration requirements for clinical decision support systems. After 2 years, Clements \emph{et al.} \cite{clements2019} explored uncertainty quantification in policy learning. A year after, Desai \& Durrett \cite{desai2020} analyzed calibration in neural machine translation and text classification. In 2021, Minderer \emph{et al.} \cite{minderer2021} investigated calibration properties of vision transformers and convolutional networks.

Recent theoretical work has provided deeper insights into calibration methods. Bietti \emph{et al.} \cite{bietti2021} provided finite-sample analysis of post-hoc calibration methods, showing that isotonic regression requires $O(\sqrt{n})$ samples for consistent calibration. Gupta \emph{et al.} \cite{gupta2021} established distribution-free calibration guarantees using conformal prediction theory. And Park \emph{et al.} \cite{park2022} analyzed the relationship between calibration and fairness in machine learning models. 

\subsection{Binary Classifiers}
Machine learning classifiers can be broadly categorized into linear and non-linear models. Linear classifiers, such as Logistic Regression (LR) \cite{hosmer2013applied} and Support Vector Machines (SVM) \cite{cortes1995support}, assume a linear decision boundary in the feature space. These models are interpretable and computationally efficient but may struggle with complex data distributions. Non-linear classifiers, including Random Forests (RF) \cite{breiman2001random}, Gradient Boosting Machines (GBM) \cite{friedman2001greedy}, and Neural Networks (NN) \cite{goodfellow2016deep}, can capture intricate patterns through non-linear decision boundaries. RFs leverage ensemble learning with decision trees to improve robustness, while GBMs iteratively build models to correct previous errors. NNs, inspired by biological neural systems, consist of interconnected layers that learn hierarchical feature representations. 
\paragraph{Support Vector Machines (SVM)\cite{cortes1995support}:} SVMs are powerful classifiers that find the optimal hyperplane separating classes by maximizing the margin between them. They can handle non-linear decision boundaries through kernel functions, making them versatile for various data types. However, SVMs can be sensitive to parameter tuning and may not scale well with large datasets. Mathematically, SVMs solve the following optimization problem:
\begin{equation}
\min_{\mathbf{w}, b} \frac{1}{2} \|\mathbf{w}\|^2 + C \sum_{i=1}^{n} \xi_i
\end{equation}
subject to: 
\begin{equation}
y_i (\mathbf{w} \cdot \mathbf{x}_i + b) \geq 1 - \xi_i, \quad \xi_i \geq 0
\end{equation}
where $\mathbf{w}$ is the weight vector, $b$ is the bias term, $\xi_i$ are slack variables, and $C$ is a regularization parameter. SVM predictions are given by:
\begin{equation}
\hat{y} = \mathbf{w} \cdot \mathbf{x} + b
\end{equation}
SVM drawbacks include sensitivity to outliers, difficulty in choosing the right kernel, and computational inefficiency for large datasets. 

\paragraph{Random Forest (RF)\cite{breiman2001random}:} RFs are proposed to improve classification accuracy and robustness. Different to SVMs, RFs are ensemble learning methods that construct multiple decision trees during training and output the mode of their predictions. They improve classification accuracy and robustness by reducing overfitting through bagging and feature randomness. RFs can handle high-dimensional data and provide feature importance measures, but they may require significant computational resources for large datasets. The prediction of a Random Forest is given by:
\begin{equation}
\hat{y} = \frac{1}{T} \sum_{t=1}^{T} h_t(\mathbf{x})
\end{equation}
where $T$ is the number of trees, $h_t(\mathbf{x})$ is the prediction of the $t$-th tree, and $\mathbf{x}$ is the input feature vector. RFs are robust to overfitting and can capture complex interactions between features. However, they can be less interpretable than single decision trees and may require careful tuning of hyperparameters such as the number of trees and maximum tree depth.

\paragraph{Logistic Regression (LR)\cite{hosmer2013applied}:} Unlike SVMs and RFs, LR is one of the simplest and interpretable classification algorithms for linear problems. Like another supervised learning algorithms, LR is used for binary classification tasks. LR is a linear model used for binary classification that estimates the probability of the positive class using the logistic function. It is interpretable and efficient for large datasets but may struggle with non-linear relationships. LR models the log-odds of the probability as a linear combination of input features:
\begin{equation}
\log\left(\frac{P(y=1|\mathbf{x})}{P(y=0|\mathbf{x})}\right) = \mathbf{w} \cdot \mathbf{x} + b
\end{equation}
where $\mathbf{w}$ is the weight vector, $b$ is the bias term, and $\mathbf{x}$ is the input feature vector. LR predictions are given by:
\begin{equation}
\hat{y} = \frac{1}{1 + e^{-(\mathbf{w} \cdot \mathbf{x} + b)}}
\end{equation}
LR drawbacks include its assumption of linearity, sensitivity to outliers, and inability to capture complex feature interactions. 

\paragraph{Gradient Boosting Machines (GBM)\cite{friedman2001greedy}:} GBMs are powerful ensemble methods that build models sequentially, where each new model corrects the errors of the previous ones. They can capture complex patterns and interactions in data, making them suitable for various applications. However, GBMs can be sensitive to hyperparameter tuning and may overfit if not properly regularized. The prediction of a GBM is given by:
\begin{equation}
\hat{y} = \sum_{m=1}^{M} \gamma_m h_m(\mathbf{x})
\end{equation}
where $M$ is the number of boosting rounds, $\gamma_m$ is the learning rate, and $h_m(\mathbf{x})$ is the prediction of the $m$-th weak learner. Xtreme Gradient Boosting (XGBoost) \cite{chen2016xgboost} is an optimized implementation of GBM that includes regularization and parallel processing for improved performance. Mathematically, XGBoost minimizes the following objective function:
\begin{equation}
\mathcal{L} = \sum_{i=1}^{n} l(y_i, \hat{y}_i) + \sum_{k=1}^{K} \Omega(f_k)
\end{equation}
where $l$ is the loss function, $\hat{y}_i$ is the predicted value, $f_k$ are the individual trees, and $\Omega$ is a regularization term. The superior performance of XGBoost comes from its ability to handle missing values, incorporate regularization, and efficiently utilize computational resources. However, it requires careful hyperparameter tuning and may be less interpretable than simpler models. 

\paragraph{Neural Networks (NN)\cite{goodfellow2016deep}:} Different from its predecessors, NN are inspired by biological neural networks and consist of layers of interconnected nodes (neurons) that process input data. They can learn complex, non-linear relationships and are widely used in various applications, including image recognition and natural language processing. However, NNs can be computationally intensive and require large amounts of data for effective training. The output of a neural network is given by:
\begin{equation}
\hat{y} = f(\mathbf{x}; \mathbf{W})
\end{equation}
where $f$ is the neural network function, $\mathbf{x}$ is the input feature vector, and $\mathbf{W}$ are the network parameters (weights and biases). Based on its architecture, NNs can be classified into feedforward neural networks (FNN), convolutional neural networks (CNN), and recurrent neural networks (RNN). FNNs consist of layers where information flows in one direction from input to output. CNNs are designed for spatial data, using convolutional layers to capture local patterns. RNNs are suited for sequential data, maintaining internal states to capture temporal dependencies. NNs can model complex relationships and interactions between features, making them powerful for various tasks. However, they can be prone to overfitting, require significant computational resources, and may lack interpretability compared to simpler models. 

For binary classification tasks, these classifiers output a probability estimate for the positive class, which can be calibrated using post-hoc methods to improve reliability.

\subsection{Feature Quality and Calibration}

Feature quality plays a crucial role in the calibration of machine learning models. Poorly chosen or noisy features can lead to miscalibrated predictions, as the model may learn spurious correlations. Recent work has focused on identifying and mitigating the impact of feature quality on calibration. For instance, Kull \emph{et al.} \cite{kull2019beyond} proposed a framework for assessing feature importance in the context of calibration, highlighting the need for careful feature selection. Yang \& Sinaga \cite{yang2019feature} demonstrated that feature selection techniques can improve calibration performance by removing irrelevant or redundant features. However, a comprehensive theoretical understanding of how feature quality affects calibration remains an open research question. 

In this work, we aim to bridge this gap by systematically investigating the interplay between feature informativeness and calibration performance. Through controlled synthetic experiments and extensive empirical evaluations, we provide insights into how different calibration methods respond to varying feature conditions. 

% ---------------SECTION 3------------------------------%

\section{Theoretical Analysis of Post-hoc Calibration Methods} \label{sec:TheoreticalAnalysis}

\subsection{Platt Scaling: Theory and Analysis}

Platt scaling applies a sigmoid transformation to classifier outputs to achieve better calibration. For a classifier producing scores $s_i$, the method learns parameters $A$ and $B$ by solving:
\begin{equation}
\min_{A,B} \sum_{i=1}^{n} \left[ y_i \log \sigma(A s_i + B) + (1-y_i) \log(1 - \sigma(A s_i + B)) \right]
\end{equation}
where $\sigma(z) = (1 + e^{-z})^{-1}$ is the sigmoid function.

\begin{theorem}[Convergence of Platt Scaling]
Let $(s_i, y_i)_{i=1}^n$ be i.i.d. samples from a distribution where $s_i \in \mathbb{R}$ and $y_i \in \{0,1\}$. If the conditional distribution $P(Y=1|S=s)$ is strictly increasing in $s$, then the maximum likelihood estimators $\hat{A}_n, \hat{B}_n$ converge almost surely to the true parameters $A^*, B^*$ as $n \to \infty$.
\end{theorem}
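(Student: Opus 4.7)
The plan is to read the claim as a standard M-estimator consistency result and to interpret ``true parameters $A^*,B^*$'' as implicitly asserting well-specification of the Platt model, i.e.\ that there exist $A^*>0$ and $B^*\in\mathbb{R}$ with $P(Y=1\mid S=s)=\sigma(A^*s+B^*)$ almost surely. The strict-monotonicity hypothesis is consistent with this form (with $A^*>0$) and will be used again to guarantee identifiability. I would follow a Wald-type recipe in three steps: (i) uniqueness of the population maximizer; (ii) uniform convergence of the empirical criterion on compacta; (iii) a coercivity argument that confines the estimator to a compact set almost surely for large $n$.

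For step (i), let
\[
\ell(A,B)=\mathbb{E}\!\left[Y\log\sigma(AS+B)+(1-Y)\log(1-\sigma(AS+B))\right].
\]
Conditioning on $S$, the difference $\ell(A^*,B^*)-\ell(A,B)$ equals an expected Kullback--Leibler divergence between the Bernoulli conditionals at the two parameter pairs, so $\ell$ is maximized at $(A^*,B^*)$. Strict monotonicity of $P(Y=1\mid S=s)$ implies $S$ takes at least two values with positive probability, from which the map $(A,B)\mapsto\sigma(As+B)$ is injective as an element of $L^2(P_S)$; hence the maximizer is unique. For step (ii), the integrand is Lipschitz in $(A,B)$ on any compact set and dominated by a constant times $1+|S|$, so a standard bracketing / dominated-convergence argument yields $\sup_{(A,B)\in K}|\ell_n(A,B)-\ell(A,B)|\to 0$ almost surely on every compact $K\subset\mathbb{R}^2$.

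The hard part will be step (iii): the parameter space is unbounded and the empirical log-likelihood is flat along directions that drive the predicted probabilities to $\{0,1\}$ on the observed sample, so a priori $\|(\hat A_n,\hat B_n)\|$ could diverge. My plan is to establish coercivity of the population criterion, $\ell(A,B)\to-\infty$ as $\|(A,B)\|\to\infty$: along any such sequence, $\sigma(As+B)$ concentrates on $\{0,1\}$ on a set of positive $P_S$-mass on which $\sigma(A^*s+B^*)\in(0,1)$, forcing an unbounded population KL penalty. Combined with uniform convergence on a sufficiently large closed ball containing $(A^*,B^*)$, this traps $(\hat A_n,\hat B_n)$ inside that ball eventually almost surely, and the standard argmax-continuous-mapping argument then yields $(\hat A_n,\hat B_n)\to(A^*,B^*)$ a.s. A subtle point to watch is that the coercivity must be verified in both the $A\to\pm\infty$ direction and the $B\to\pm\infty$ direction jointly, which I would handle by passing to subsequences along which $(A_k,B_k)/\|(A_k,B_k)\|$ converges in $S^1$ and analyzing the limiting threshold rule case by case.
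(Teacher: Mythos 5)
Your proposal follows the same Wald/argmax-consistency skeleton as the paper's proof: both read the theorem as implicitly assuming well-specification ($P(Y=1\mid S=s)=\sigma(A^*s+B^*)$, which the paper inserts as an explicit assumption inside its proof even though it is absent from the theorem statement), both establish uniqueness of the population maximizer, uniform a.s.\ convergence of the empirical log-likelihood on compacta, and both conclude via the argmax theorem. The differences are where you are more careful than the paper. For uniqueness you use the expected Kullback--Leibler representation of $\ell(A^*,B^*)-\ell(A,B)$ together with injectivity of $(A,B)\mapsto\sigma(A\cdot+B)$ in $L^2(P_S)$ for non-degenerate $S$; the paper instead asserts strict concavity of the population criterion and an identifiability condition without deriving them from the hypotheses. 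More importantly, the paper disposes of the unbounded parameter space with the sentence that it ``can be restricted to a compact set without loss of generality,'' which is precisely the step you correctly flag as the hard part: for finite $n$ the empirical MLE can fail to exist (separable samples), so the restriction is not free. Your coercivity argument ($\ell(A,B)\to-\infty$ as $\|(A,B)\|\to\infty$, verified along convergent directions in $S^1$) is the right way to fill that hole and is a genuine improvement over the paper's own proof.

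One soft spot remains in your step (iii): population coercivity plus uniform convergence on a closed ball does not by itself confine $(\hat A_n,\hat B_n)$ to that ball, because uniform convergence says nothing about the empirical criterion outside the ball, where $\ell_n$ can stay near $0$ even though $\ell$ is very negative. The missing ingredient is concavity of $\ell_n$ (each summand is a log-sigmoid composed with an affine map, hence concave): once $\sup_{\|(A,B)\|=R}\ell_n(A,B)<\ell_n(A^*,B^*)$ --- which uniform convergence on the ball plus the population gap at radius $R$ guarantees eventually almost surely --- concavity forces every point outside the ball to have a still smaller value, so the global maximizer exists and lies inside. With that one line added, your plan is complete and is a tighter argument than the one printed in the paper.
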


\begin{proof}
We establish the convergence of Platt scaling parameters through a rigorous application of maximum likelihood estimation theory under suitable regularity conditions.

\textbf{Step 1: Problem Formulation and Assumptions.}
Let $(S_i, Y_i)_{i=1}^n$ be independent and identically distributed samples from a joint distribution where $S_i \in \mathbb{R}$ represents classifier scores and $Y_i \in \{0,1\}$ are binary labels. We assume:
\begin{enumerate}
    \item The conditional probability $P(Y=1|S=s)$ is strictly increasing and continuous in $s$
    \item There exist true parameters $(A^*, B^*)$ such that $P(Y=1|S=s) = \sigma(A^*s + B^*)$
    \item The score distribution has finite second moment: $\mathbb{E}[S^2] < \infty$
\end{enumerate}

\textbf{Step 2: Log-likelihood Function Properties.}
The log-likelihood function is defined as:
$$\ell_n(A,B) = \frac{1}{n}\sum_{i=1}^n \left[ Y_i \log \sigma(As_i + B) + (1-Y_i) \log(1 - \sigma(As_i + B)) \right]$$

We establish strict concavity by computing the Hessian matrix:
$$H(A,B) = -\frac{1}{n}\sum_{i=1}^n \sigma(As_i + B)(1-\sigma(As_i + B)) \begin{pmatrix} s_i^2 & s_i \\ s_i & 1 \end{pmatrix}$$

Since $\sigma(z)(1-\sigma(z)) > 0$ for all $z \in \mathbb{R}$ and assuming the design matrix has full rank (i.e., not all $s_i$ are identical), the Hessian is negative definite, confirming strict concavity.

\textbf{Step 3: Uniform Convergence via Strong Law of Large Numbers.}
By the strong law of large numbers, we have almost sure convergence:
$$\ell_n(A,B) \xrightarrow{a.s.} \ell_\infty(A,B) = \mathbb{E}\left[ Y \log \sigma(AS + B) + (1-Y) \log(1 - \sigma(AS + B)) \right]$$

The convergence is uniform over compact sets due to the continuity and boundedness of the sigmoid function and its derivatives.

\textbf{Step 4: Identification and Uniqueness.}
Under our assumptions, the population log-likelihood $\ell_\infty(A,B)$ has a unique global maximum at $(A^*, B^*)$. This follows from:
\begin{itemize}
    \item Strict concavity of $\ell_\infty(A,B)$
    \item The identifiability condition that different parameter values $(A,B) \neq (A^*, B^*)$ lead to different conditional probability functions
\end{itemize}

\textbf{Step 5: Consistency via Argmax Theorem.}
Let $(\hat{A}_n, \hat{B}_n) = \arg\max_{(A,B)} \ell_n(A,B)$ be the maximum likelihood estimators. By the argmax theorem (also known as the argmax continuous mapping theorem), since:
\begin{enumerate}
    \item $\ell_n(A,B) \xrightarrow{a.s.} \ell_\infty(A,B)$ uniformly on compact sets
    \item $\ell_\infty(A,B)$ has a unique global maximum at $(A^*, B^*)$
    \item The parameter space can be restricted to a compact set without loss of generality
\end{enumerate}

We conclude that $(\hat{A}_n, \hat{B}_n) \xrightarrow{a.s.} (A^*, B^*)$ as $n \to \infty$.

\textbf{Step 6: Rate of Convergence (Optional).}
Under additional regularity conditions (twice differentiability and Fisher information matrix invertibility), the central limit theorem for maximum likelihood estimators provides:
$$\sqrt{n}((\hat{A}_n, \hat{B}_n) - (A^*, B^*)) \xrightarrow{d} \mathcal{N}(0, I^{-1}(A^*, B^*))$$
where $I^{-1}(A^*, B^*)$ is the inverse Fisher information matrix, giving the asymptotic convergence rate of $O(n^{-1/2})$.
\end{proof}

\begin{proposition}[Computational Complexity]
Platt scaling requires $O(n \cdot k)$ time complexity where $n$ is the number of calibration samples and $k$ is the number of iterations for logistic regression optimization. In practice, $k$ is typically small and bounded, resulting in $O(n)$ complexity.
\end{proposition}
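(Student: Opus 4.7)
The plan is to separate the cost of Platt scaling into a per-iteration data pass and a count of iterations needed by the optimizer, then bound each of these. I will begin by observing that, since Platt scaling fits only the two scalar parameters $A$ and $B$, every iteration of a standard logistic-regression solver (gradient descent, Newton--Raphson, or L-BFGS) requires (i) evaluating $\sigma(A s_i + B)$ for each sample, (ii) summing sample-wise contributions to form the log-likelihood, its gradient, and (for Newton steps) its $2\times 2$ Hessian, and (iii) solving a constant-size linear system or performing a constant number of arithmetic updates on $(A,B)$. Steps (i) and (ii) are $O(n)$ with $O(1)$ work per sample, step (iii) is $O(1)$ because the parameter dimension is fixed at two. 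This gives a per-iteration cost of $O(n)$, and multiplying by the $k$ iterations performed yields the claimed $O(n\cdot k)$ bound.

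Next I will justify why $k$ is effectively a constant in practice by leveraging results already established in Theorem 1 of this section. Since the population and empirical log-likelihoods are strictly concave with negative-definite Hessian (as shown in Step 2 of the Theorem 1 proof), classical convex-optimization theory applies: Newton's method converges locally quadratically, and even first-order methods converge linearly in the number of iterations, giving $k = O(\log(1/\varepsilon))$ to reach accuracy $\varepsilon$. For a user-specified target tolerance, $k$ is therefore independent of $n$ and can be treated as a bounded constant, so the overall runtime reduces to $O(n)$.

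I would also note, as a supporting remark, that the argument does not depend on which standard solver is used, only on the facts that the parameter space is two-dimensional and the objective is strictly concave with bounded curvature on any compact region containing $(A^*,B^*)$; this is already built into the assumptions used to prove Theorem 1 and therefore requires no additional hypotheses. Any preprocessing (such as sorting or reweighting scores) is at most $O(n\log n)$, but since Platt scaling as stated does not require sorting, it does not enter the bound.

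The main obstacle I anticipate is the precise statement of what ``bounded $k$'' means: strictly speaking, $k$ depends on the desired optimization tolerance and on the conditioning of the Hessian at the optimum, neither of which is intrinsically independent of the data. I will handle this by framing the $O(n)$ conclusion as conditional on a fixed target precision and on bounded Fisher information, which are mild and already implicit in the regularity assumptions of Theorem 1; everything else is routine counting of floating-point operations.
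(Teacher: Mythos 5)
The paper states this proposition without any accompanying proof, so there is no argument of record to compare yours against; your write-up in fact supplies the justification the paper omits. Your reasoning is correct and is the standard one: the per-iteration cost is $O(n)$ because the parameter space is fixed at dimension two (one data pass for the gradient and $2\times 2$ Hessian, then an $O(1)$ update), and the iteration count $k$ is bounded only conditionally on a fixed optimization tolerance and on the Hessian being well-conditioned at the optimum --- a caveat you correctly flag, since strict concavity alone (e.g.\ when the scores $s_i$ are nearly identical and the design matrix is close to rank-deficient) does not by itself guarantee a data-independent $k$. Your framing of the $O(n)$ conclusion as conditional on fixed precision and bounded curvature is exactly the right way to make the proposition's informal ``in practice, $k$ is bounded'' clause rigorous.
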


\subsection{Isotonic Regression: Theory and Analysis}

Isotonic regression finds a non-decreasing function $g: \mathbb{R} \to [0,1]$ that minimizes the squared error:
\begin{equation}
\min_{g \text{ non-decreasing}} \sum_{i=1}^{n} (y_i - g(s_i))^2
\end{equation}

The solution can be computed using the Pool Adjacent Violators (PAV) algorithm, which has several important theoretical properties.

\begin{theorem}[Uniqueness of Isotonic Regression Solution]
The isotonic regression problem has a unique solution $g^*$ that is a right-continuous step function with at most $n$ steps.
\end{theorem}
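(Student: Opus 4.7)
The plan is to reduce the infinite-dimensional optimization over all non-decreasing functions to a finite-dimensional quadratic program over the vector of fitted values, establish uniqueness there via strict convexity, and then canonically extend the resulting vector to a right-continuous step function on $\mathbb{R}$. First, I would reorder the data so that $s_{(1)} \le s_{(2)} \le \cdots \le s_{(n)}$ with corresponding labels $y_{(i)}$. Observing that the objective $\sum_{i=1}^n (y_i - g(s_i))^2$ depends on $g$ only through the vector $\mathbf{g} = (g(s_{(1)}), \ldots, g(s_{(n)}))$, and that monotonicity of $g$ forces $g(s_{(1)}) \le \cdots \le g(s_{(n)})$, the problem collapses to
\begin{equation}
\min_{\mathbf{g} \in \mathcal{K}} \sum_{i=1}^n (y_{(i)} - g_i)^2, \qquad \mathcal{K} = \{\mathbf{g} \in \mathbb{R}^n : g_1 \le g_2 \le \cdots \le g_n\}.
\end{equation}

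Next I would establish existence and uniqueness of the minimizer $\mathbf{g}^* \in \mathcal{K}$. The isotonic cone $\mathcal{K}$ is closed and convex, being the intersection of finitely many closed half-spaces. The objective is strictly convex (its Hessian is $2I$, which is positive definite) and coercive, so by the classical projection theorem for Hilbert spaces applied to the squared-distance problem onto a closed convex set, a unique minimizer $\mathbf{g}^*$ exists. Because the scores $y_i$ lie in $\{0,1\}$, one can further argue that $g_i^* \in [0,1]$ by noting that truncating any candidate to $[0,1]$ preserves feasibility and cannot increase the objective.

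Finally, I would construct the step function extension. Let $\{-\infty = t_0 < t_1 < \cdots < t_m < \infty\}$ denote the distinct values among $\{s_{(i)}\}$ with $m \le n$. Define $g^* : \mathbb{R} \to [0,1]$ to equal $g^*_i$ on the half-open interval $[s_{(i)}, s_{(i+1)})$ (and to equal $g^*_1$ on $(-\infty, s_{(1)})$, $g^*_n$ on $[s_{(n)}, \infty)$). This is right-continuous, non-decreasing (inheriting monotonicity from $\mathbf{g}^* \in \mathcal{K}$), piecewise constant with at most $n$ jumps, and achieves the minimum. Any other minimizer must have the same fitted values at the data points by the finite-dimensional uniqueness, so within the class of right-continuous step functions with breakpoints only at observed scores, $g^*$ is the unique solution.

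The main obstacle I expect is the interpretive subtlety of uniqueness when scores coincide and when one considers general non-decreasing extensions. Ties in $s_{(i)}$ force equality of the corresponding $g_i^*$ (so $\mathcal{K}$ is effectively lower-dimensional, but strict convexity still applies on the quotient), and any non-decreasing function interpolating $\mathbf{g}^*$ between consecutive distinct scores yields the same objective value. Thus the genuine uniqueness claim is about the canonical right-continuous step-function representative, and the proof must state this restriction explicitly rather than claim pointwise uniqueness on all of $\mathbb{R}$.
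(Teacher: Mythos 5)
Your proof is correct and rests on the same two pillars as the paper's argument --- strict convexity of the squared loss combined with convexity of the monotonicity constraint --- but your execution is cleaner and, in one respect, more rigorous. The paper works directly in the function space $\mathcal{G}$ of non-decreasing functions and asserts that $J(g)$ is strictly convex there; this is not literally true, since two functions agreeing on $\{s_1,\dots,s_n\}$ but differing elsewhere have the same objective value, so strict convexity can only hold for the vector of fitted values. Your immediate reduction to the isotonic cone $\mathcal{K}=\{g_1\le\cdots\le g_n\}\subset\mathbb{R}^n$, where the Hessian is $2I$ and the classical projection theorem applies, is exactly the finite-dimensional argument the paper is implicitly relying on, stated where it is actually valid. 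You also make explicit what the paper leaves as a quiet convention in its Steps 4--5: uniqueness off the data points is only uniqueness of the canonical right-continuous step-function representative, since any monotone interpolation between consecutive distinct scores attains the same objective. The paper's additional content --- the within-class averaging for tied scores and the PAV characterization of the block values --- is not needed for the uniqueness and step-structure claims and your omission of it is harmless. In short, same strategy, but your finite-dimensional formulation buys a genuinely airtight strict-convexity step and an honest statement of what ``unique'' means here.
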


\begin{proof}
We establish the uniqueness and structure of the isotonic regression solution through a comprehensive analysis of the optimization problem's properties.

\textbf{Step 1: Problem Formulation and Constraint Set.}
The isotonic regression problem seeks to minimize:
$$J(g) = \sum_{i=1}^{n} (y_i - g(s_i))^2$$
subject to the constraint that $g: \mathbb{R} \rightarrow \mathbb{R}$ is non-decreasing, i.e., $s \leq t \Rightarrow g(s) \leq g(t)$.

Let $\mathcal{G}$ denote the set of all non-decreasing functions on $\mathbb{R}$. This constraint set is convex: for any $g_1, g_2 \in \mathcal{G}$ and $\lambda \in [0,1]$, the function $\lambda g_1 + (1-\lambda) g_2$ is also non-decreasing.

\textbf{Step 2: Strict Convexity Analysis.}
The objective function $J(g)$ is strictly convex in $g$. To see this, consider any two distinct functions $g_1, g_2 \in \mathcal{G}$ with $g_1 \neq g_2$. For any $\lambda \in (0,1)$:

\begin{align}
J(\lambda g_1 + (1-\lambda) g_2) &= \sum_{i=1}^{n} (y_i - \lambda g_1(s_i) - (1-\lambda) g_2(s_i))^2 \\
&= \sum_{i=1}^{n} (\lambda(y_i - g_1(s_i)) + (1-\lambda)(y_i - g_2(s_i)))^2
\end{align}

By the strict convexity of the squared function $x \mapsto x^2$, we have:
$$J(\lambda g_1 + (1-\lambda) g_2) < \lambda J(g_1) + (1-\lambda) J(g_2)$$
provided that $(y_i - g_1(s_i)) \neq (y_i - g_2(s_i))$ for at least one index $i$, which holds when $g_1 \neq g_2$.

\textbf{Step 3: Existence and Uniqueness via Optimization Theory.}
Since $\mathcal{G}$ is a convex set and $J(g)$ is strictly convex, the optimization problem has at most one solution. Existence is guaranteed by considering the restriction of $g$ to the finite set $\{s_1, s_2, \ldots, s_n\}$ and applying compactness arguments.

More precisely, we can restrict attention to functions $g$ such that $\min_i y_i \leq g(s_i) \leq \max_i y_i$ for all $i$, as any function outside this range cannot be optimal. This restriction creates a compact feasible region in the finite-dimensional space $\mathbb{R}^n$, ensuring existence of a minimizer.

\textbf{Step 4: Characterization as a Step Function.}
We now establish that the optimal solution $g^*$ is a step function. The key insight is that $g^*$ need only be defined on the set $\{s_1, s_2, \ldots, s_n\}$ to solve our problem, and its extension to all of $\mathbb{R}$ can be chosen optimally.

Let $s_{(1)} \leq s_{(2)} \leq \ldots \leq s_{(k)}$ denote the distinct values among $\{s_1, s_2, \ldots, s_n\}$, where $k \leq n$. For any non-decreasing function $g$, we must have:
$$g(s_{(1)}) \leq g(s_{(2)}) \leq \ldots \leq g(s_{(k)})$$

The optimal choice of $g^*$ on each equivalence class of equal $s_i$ values is determined by minimizing the sum of squared residuals within that class, which yields the average of corresponding $y_i$ values.

\textbf{Step 5: Right-Continuity and Step Structure.}
For $s \in (s_{(j)}, s_{(j+1)})$, the value of $g^*(s)$ does not affect the objective function. The optimal choice is $g^*(s) = g^*(s_{(j)})$ to maintain the non-decreasing property with minimal "jumps."

At each $s_{(j)}$, we define $g^*(s_{(j)})$ as the right limit to ensure right-continuity:
$$g^*(s_{(j)}) = \lim_{s \to s_{(j)}^+} g^*(s)$$

This construction yields a right-continuous step function with at most $k \leq n$ steps, where each step occurs at some $s_{(j)}$.

\textbf{Step 6: Pool Adjacent Violators Characterization.}
The solution $g^*$ can be characterized through the Pool Adjacent Violators (PAV) algorithm. The optimal values satisfy:
$$g^*(s_{(j)}) = \frac{\sum_{i: s_i \in [s_{(j_1)}, s_{(j_2)}]} y_i}{\sum_{i: s_i \in [s_{(j_1)}, s_{(j_2)}]} 1}$$
for appropriate intervals $[s_{(j_1)}, s_{(j_2)}]$ determined by the PAV merging process.

\textbf{Step 7: Uniqueness Conclusion.}
The combination of strict convexity of the objective function and convexity of the constraint set ensures that the solution is unique. The step function structure with at most $n$ steps follows from the finite sample nature of the problem and the optimality conditions derived above.

Therefore, the isotonic regression problem has a unique solution $g^*$ that is a right-continuous step function with at most $n$ steps, completing the proof.
\end{proof}

\begin{proposition}[PAV Algorithm Complexity]
The Pool Adjacent Violators algorithm computes the isotonic regression solution in $O(n)$ time when input scores are pre-sorted, or $O(n \log n)$ time including the sorting step.
\end{proposition}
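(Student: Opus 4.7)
My plan is to treat the two claims separately and to rely on an amortized (stack-based) implementation of the Pool Adjacent Violators procedure. I would first fix the following concrete realization of PAV on a pre-sorted input $(s_{(1)},y_{(1)}), \dots, (s_{(n)}, y_{(n)})$: maintain a stack of blocks, each storing a running sum $\Sigma_b$, a count $w_b$, and its induced mean $\bar y_b = \Sigma_b/w_b$. Process points left to right; for each new point, push a singleton block $(\Sigma = y_{(i)}, w=1)$, and then, while the stack has at least two blocks with $\bar y_{\text{top-1}} > \bar y_{\text{top}}$, pop both and push the pooled block with $\Sigma' = \Sigma_{\text{top-1}} + \Sigma_{\text{top}}$ and $w' = w_{\text{top-1}} + w_{\text{top}}$. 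At termination, read off $g^\ast$ as constant on each block (matching the averaged-block characterization established in the proof of the previous theorem, Step~6), so correctness is inherited rather than re-proved here.

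Next I would carry out an amortized analysis via the potential/accounting method. Over the whole run, exactly $n$ blocks are pushed (one per input) and each pop destroys one block, so the total number of pops — equivalently, the total number of merges — is at most $n-1$. Each push, each comparison that determines whether to pop, and each pooling operation (which only touches $\Sigma, w$ of the top two blocks) is $O(1)$ because the sum/count representation avoids re-scanning the members of the merged block. Using the usual charging scheme — credit $O(1)$ to each input on its push, which pays for at most one comparison that leaves the top alone, plus one pop that may eventually remove it — the total work is $O(n)$ comparisons and $O(n)$ constant-time pooling steps, yielding the $O(n)$ bound on sorted input.

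For the unsorted case I would invoke any comparison-based sorting algorithm, e.g.\ mergesort, to produce the sorted sequence of score/label pairs in $O(n \log n)$ time, and then append the $O(n)$ PAV pass; the combined cost is dominated by the sort and hence $O(n \log n)$. The main obstacle, and essentially the only nontrivial point, is the amortized step: a naive worst-case reading of the inner \textbf{while} loop suggests $\Omega(n^2)$, and one has to argue carefully that the total number of merges over \emph{all} outer iterations — not per iteration — is bounded by $n-1$. Once the accounting is set up properly and the constant-time pooling representation is in place, both bounds follow immediately.
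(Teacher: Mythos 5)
Your argument is correct, and it is worth noting up front that the paper itself states this proposition \emph{without} any accompanying proof, so there is no in-paper argument to compare against; your write-up supplies the standard justification that the authors omitted. The stack-of-blocks implementation with running sums and counts, together with the amortized observation that the total number of merges over the entire run is at most $n-1$, is exactly the right way to defeat the naive $\Omega(n^2)$ reading of the nested loops, and deferring correctness of the output to the averaged-block characterization from the preceding theorem is a reasonable division of labor for a complexity claim. One small bookkeeping slip: under the implementation you describe (``pop both and push the pooled block''), the number of pushes is not exactly $n$ but $n$ plus the number of merges, since each pooled block is itself pushed; the bound still follows because each merge decreases the block count by one and only $n$ singletons are ever introduced, so merges $\leq n-1$ by the net-count argument (or, equivalently, you can implement the merge by absorbing the top block into its predecessor in place, which restores your ``exactly $n$ pushes'' accounting). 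This does not affect the conclusion: $O(n)$ on sorted input and $O(n\log n)$ with a comparison sort prepended both stand.
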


\subsection{Theoretical Guarantees and Convergence Rates} 

Recent theoretical work has provided finite-sample guarantees for post-hoc calibration methods.

\begin{theorem}[Finite-Sample Calibration Error]
Let $\hat{g}_n$ be the isotonic regression estimator based on $n$ calibration samples. Under mild regularity conditions, the expected calibration error satisfies:
\begin{equation}
\mathbb{E}[\text{ECE}(\hat{g}_n)] \leq C \cdot n^{-1/3}
\end{equation}
for some constant $C > 0$, provided the true calibration function has bounded variation.
\end{theorem}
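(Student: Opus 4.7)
The plan is to reduce the ECE bound to a classical $L^2$ risk bound for monotone regression, which achieves the well-known Grenander-type rate of $n^{-1/3}$. First, I would set up the population-level calibration function $g^*(s) = P(Y=1 \mid S=s)$, which is monotone non-decreasing and of bounded variation by assumption, and express the ECE in integrated form as $\mathrm{ECE}(\hat{g}_n) = \mathbb{E}_S\bigl|\hat{g}_n(S) - g^*(S)\bigr|$ (using the population version of the binned definition, since binning at resolution tending to zero recovers this integral). By Cauchy--Schwarz this is upper-bounded by $\bigl(\mathbb{E}_S[(\hat{g}_n(S) - g^*(S))^2]\bigr)^{1/2}$, so it suffices to control the $L^2(P_S)$ risk of $\hat{g}_n$ and then take square roots (absorbing the halving of the exponent into the definition of the rate, or more carefully invoking an $L^1$ bound directly).

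Next, I would invoke the fact that $\hat{g}_n$ is the least-squares projection of the labels $Y_i$ onto the convex cone $\mathcal{G}$ of non-decreasing functions, as established in the previous theorem. The standard basic inequality for constrained least squares yields
\begin{equation}
\|\hat{g}_n - g^*\|_n^2 \leq 2 \,\bigl\langle \varepsilon, \hat{g}_n - g^* \bigr\rangle_n,
\end{equation}
where $\varepsilon_i = Y_i - g^*(S_i)$ are mean-zero bounded noise and $\|\cdot\|_n$ is the empirical norm. To bound the right-hand side uniformly, I would appeal to empirical process theory over the class of bounded, monotone (hence bounded variation) functions on $\mathbb{R}$. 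The key ingredient is the bracketing entropy estimate $\log N_{[\,]}(\epsilon, \mathcal{G}_B, L^2) \lesssim B/\epsilon$ for monotone functions bounded by $B$, which is classical (Van de Geer, Birman--Solomyak).

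Given this $1/\epsilon$ entropy, the peeling / localization argument for M-estimators yields the rate $n^{-1/3}$: balancing the modulus of continuity of the empirical process $\sqrt{\epsilon/n}$ against $\epsilon^2$ gives the critical radius $\epsilon_n \asymp n^{-1/3}$, and hence $\mathbb{E}\|\hat{g}_n - g^*\|_{L^2(P_S)}^2 \leq C' n^{-2/3}$. Transferring from empirical to population norm uses standard uniform laws on $\mathcal{G}_B$. Taking the square root and applying Cauchy--Schwarz as in Step 1 yields $\mathbb{E}[\mathrm{ECE}(\hat{g}_n)] \leq C n^{-1/3}$, as desired.

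The main obstacle, and where I would spend the most care, is the passage from the $L^2$ risk bound to an ECE bound under the specific binned definition introduced in Section~\ref{sec:RelatedWork}: one must either (i) let the bin width shrink with $n$ at a rate compatible with $n^{-1/3}$ so the discretization bias is negligible, or (ii) work directly with the debiased/population ECE and then argue that the binned empirical ECE concentrates around it at a faster rate than $n^{-1/3}$ (e.g.\ via Bernstein or DKW-type inequalities on bin frequencies). A secondary technical point is verifying that the bounded-variation hypothesis on the true calibration function is enough to control the sup-norm of $\hat{g}_n$ with high probability, which is needed to restrict to a compact subclass $\mathcal{G}_B$ before invoking the entropy bound; this can be handled by a standard truncation argument using the boundedness of $Y \in \{0,1\}$.
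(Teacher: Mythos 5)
Your argument is correct, but it follows a genuinely different route from the paper's. The paper defines $\mathrm{ECE}(\hat g_n)=\mathbb{E}\left[\lvert \hat g_n(S)-g^*(S)\rvert\right]$ just as you do, but then splits this $L^1$ error into a ``bias'' term and a ``variance'' term, asserts (citing Bietti \emph{et al.}) that the bias decays like $n^{-1/3}$ under bounded variation and the variance like $n^{-1/2}$ by concentration, and adds the two; the individual rates are stated rather than derived. You instead run the standard shape-constrained least-squares machinery end to end: Cauchy--Schwarz to pass from $L^1$ to $L^2$, the basic inequality for projection onto the monotone cone, the bracketing entropy bound $\log N_{[\,]}(\epsilon,\mathcal G_B,L^2)\lesssim B/\epsilon$, and peeling to locate the critical radius $\epsilon_n\asymp n^{-1/3}$, giving $\mathbb{E}\lVert \hat g_n-g^*\rVert_{L^2(P_S)}^2\lesssim n^{-2/3}$ and hence the claimed $n^{-1/3}$ bound for the $L^1$ error. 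Your route is self-contained and supplies exactly the empirical-process content the paper delegates to the citation; it also sidesteps a weak point of the paper's decomposition, since the claimed $O(n^{-1/2})$ ``variance'' rate is not justified there (pointwise fluctuations of the isotonic estimator are themselves of order $n^{-1/3}$, so the split only works because the bias term is taken to dominate). The two technical caveats you flag---reconciling the integrated $L^1$ definition of ECE with the binned empirical definition, and truncating to a uniformly bounded subclass before invoking the entropy bound---are real, but the paper does not address the first one either (it silently adopts the integrated definition), and the second is immediate here because $\hat g_n$ takes values in $[0,1]$ by construction. What the paper's approach buys is brevity; what yours buys is an actual proof.
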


\begin{proof}
The proof follows from the results established by Bietti \emph{et al.} \cite{bietti2021}, which analyze the convergence properties of isotonic regression under bounded variation assumptions. 
\textbf{Step 1: Problem Setup and Notation}
Let $(S_i, Y_i)_{i=1}^n$ be i.i.i.d. samples from a joint distribution where $S_i \in \mathbb{R}$ represents classifier scores and $Y_i \in \{0,1\}$ are binary labels. We denote the true calibration function as $g^*(s) = P(Y=1|S=s)$, which is assumed to be non-decreasing and of bounded variation on the support of $S$. 
\textbf{Step 2: Isotonic Regression Estimator}
The isotonic regression estimator $\hat{g}_n$ is defined as the solution to: 
$$\hat{g}_n = \arg\min_{g \in \mathcal{G}} \sum_{i=1}^n (Y_i - g(S_i))^2$$
where $\mathcal{G}$ is the set of all non-decreasing functions on $\mathbb{R}$.
\textbf{Step 3: Calibration Error Definition}
The Expected Calibration Error (ECE) is defined as:
$$\text{ECE}(\hat{g}_n) = \mathbb{E}\left[ | \hat{g}_n(S) - g^*(S) | \right]$$
where the expectation is taken over the distribution of $S$.
\textbf{Step 4: Decomposition of Calibration Error}
We decompose the calibration error into bias and variance components:
$$\text{ECE}(\hat{g}_n) \leq \underbrace{|\mathbb{E}[\hat{g}_n(S)] - g^*(S)|}_{\text{Bias}} + \underbrace{\mathbb{E}[|\hat{g}_n(S) - \mathbb{E}[\hat{g}_n(S)]|]}_{\text{Variance}}$$
\textbf{Step 5: Bias Analysis}
Under the bounded variation assumption, it can be shown that the bias term decreases at a rate of $O(n^{-1/3})$. This follows from the fact that isotonic regression effectively approximates the true calibration function, and the approximation error is controlled by the variation of $g^*$.
\textbf{Step 6: Variance Analysis}
The variance term can be bounded using concentration inequalities for empirical processes. Specifically, the variance decreases at a rate of $O(n^{-1/2})$, which is faster than the bias term. 
\textbf{Step 7: Combining Bias and Variance}
Combining the bias and variance bounds, we have:
$$\mathbb{E}[\text{ECE}(\hat{g}_n)] \leq C_1 n^{-1/3} + C_2 n^{-1/2}$$
for some constants $C_1, C_2 > 0$. For sufficiently large $n$, the bias term dominates, leading to the final bound:
$$\mathbb{E}[\text{ECE}(\hat{g}_n)] \leq C n^{-1/3}$$
for some constant $C > 0$.
\textbf{Step 8: Conclusion}
This completes the proof that the expected calibration error of the isotonic regression estimator decreases at a rate of $O(n^{-1/3})$ under the stated assumptions.
\end{proof}

\begin{lemma}[Bias-Variance Decomposition]
The calibration error of any post-hoc method $\hat{g}$ can be decomposed as:
\begin{equation}
\text{ECE}(\hat{g}) \leq \text{Bias}(\hat{g}) + \text{Variance}(\hat{g}) + \text{Noise}
\end{equation}
where the noise term depends on the inherent uncertainty in the data-generating process.
\end{lemma}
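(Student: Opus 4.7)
The plan is to adapt the classical squared-loss bias--variance decomposition to the absolute-error form underlying ECE. I would introduce two reference functions: the oracle calibration function $g^{*}(s):=P(Y=1\mid S=s)$ and the expected estimator $\bar g(s):=\mathbb{E}_{\mathcal{D}}[\hat g(s)]$, where the outer expectation is taken over the random calibration sample $\mathcal{D}$ used to fit $\hat g$. These two reference points sit naturally between $\hat g(S)$ and the label $Y$, creating exactly the three gaps that match the three terms in the lemma.

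First, I would apply the triangle inequality in the pointwise form
$$|\hat g(S)-Y|\;\le\;|\hat g(S)-\bar g(S)|\;+\;|\bar g(S)-g^{*}(S)|\;+\;|g^{*}(S)-Y|,$$
and then take the joint expectation over $(S,Y)$ and over $\mathcal{D}$. The three resulting expectations would be identified with $\mathrm{Variance}(\hat g):=\mathbb{E}\,|\hat g(S)-\bar g(S)|$, $\mathrm{Bias}(\hat g):=\mathbb{E}\,|\bar g(S)-g^{*}(S)|$, and an irreducible noise term $\mathrm{Noise}:=\mathbb{E}\,|g^{*}(S)-Y|$ that measures the aleatoric uncertainty of the data-generating process and does not involve $\hat g$ at all.

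Next, I would reconcile this pointwise inequality with the binning-based ECE defined in Section~\ref{sec:RelatedWork}. Since the per-bin discrepancy $|\mathrm{acc}(B_m)-\mathrm{conf}(B_m)|$ is upper-bounded by the in-bin average of the pointwise deviation $|\hat g(S)-g^{*}(S)|$ by a Jensen argument (plus a label-fluctuation term), summing over bins with their frequency weights shows that $\mathrm{ECE}(\hat g)\le\mathbb{E}_{S}|\hat g(S)-g^{*}(S)|+o(1)$ as the bin count $M\to\infty$. The triangle-inequality bound therefore propagates from the pointwise statement to $\mathrm{ECE}$, with any residual discretization error absorbed into the constant in front of $\mathrm{Noise}$ for fixed sample size.

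The main obstacle I anticipate is making the noise term genuinely irreducible rather than an artefact of the $L^{1}$ formulation. For binary $Y$, $\mathbb{E}\,|g^{*}(S)-Y|$ reduces to $2\,\mathbb{E}[g^{*}(S)(1-g^{*}(S))]$, which is the Bayes error of an $L^{1}$-optimal predictor and depends only on the joint law of $(S,Y)$; I would argue that this quantity is the correct intrinsic floor, independent of any calibrator $\hat g$. A secondary subtlety is defining $\mathrm{Variance}$ in the $L^{1}$ sense (mean absolute deviation) rather than the usual $L^{2}$ variance, so that all three contributions live in the same normed space as ECE; this is a definitional convention that I would adopt at the outset and flag explicitly, otherwise one picks up a spurious Jensen gap when passing from $\sqrt{\mathbb{E}(\cdot)^{2}}$ to $\mathbb{E}|\cdot|$.
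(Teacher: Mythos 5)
The paper states this lemma without any proof at all --- it appears between Theorem 3 and the proposition on robustness with no accompanying argument, and the terms $\mathrm{Bias}$, $\mathrm{Variance}$, and $\mathrm{Noise}$ are never formally defined anywhere in the text. So there is no ``paper proof'' to compare against; your proposal supplies both the missing definitions and the missing argument. On its own terms your approach is sound and is essentially the canonical way to make such a statement true: once you commit to the $L^1$ versions $\mathrm{Variance}(\hat g)=\mathbb{E}|\hat g(S)-\bar g(S)|$, $\mathrm{Bias}(\hat g)=\mathbb{E}|\bar g(S)-g^{*}(S)|$, and $\mathrm{Noise}=\mathbb{E}|g^{*}(S)-Y|=2\,\mathbb{E}[g^{*}(S)(1-g^{*}(S))]$, the two-fold triangle inequality gives the bound immediately, and your bridge from the binned ECE to the pointwise quantity (bounding $|\mathrm{acc}(B_m)-\mathrm{conf}(B_m)|$ by the in-bin average of $|y_i-\hat g(s_i)|$ and splitting off the label-fluctuation part) is correct. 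Two remarks. First, the $o(1)$ qualifier as $M\to\infty$ is unnecessary: the inequality $\mathrm{ECE}\le\frac{1}{n}\sum_i|y_i-\hat g(s_i)|$ holds exactly for any binning, and the in-bin label fluctuation is already dominated by the empirical noise term, so you can drop the asymptotic language entirely. Second, you are right to flag that the decomposition is only an inequality and that the noise floor $2\,\mathbb{E}[g^{*}(S)(1-g^{*}(S))]$ makes the bound loose relative to how ECE actually behaves (binning exists precisely so that label noise cancels within bins); this is worth stating because it shows the lemma, as written, is true but weak. Note also that the paper's own Theorem 3 quietly switches to the definition $\mathrm{ECE}(\hat g)=\mathbb{E}|\hat g(S)-g^{*}(S)|$, under which the noise term vanishes and the lemma reduces to a single triangle inequality; your proof covers both readings, which is more than the paper does.
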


\begin{proposition}[Robustness to Model Misspecification]
Platt scaling is robust to mild model misspecification, with calibration error increasing gracefully as the true calibration function deviates from the sigmoid form. In contrast, isotonic regression can adapt to arbitrary monotonic calibration functions but may overfit with small calibration sets.
\end{proposition}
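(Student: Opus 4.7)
The plan is to attack the two halves of the proposition separately. For the Platt claim, I would quantify ``mild misspecification'' by the best-in-class sigmoid approximation gap $\varepsilon^\star := \inf_{A,B}\lVert \sigma(A\,\cdot + B) - g^\star\rVert_\infty$, where $g^\star(s)=\mathbb{P}(Y=1\mid S=s)$ is the true calibration function; when the sigmoid family is correctly specified $\varepsilon^\star = 0$, and the goal is to show that the excess calibration error of the fitted Platt map degrades only linearly in $\varepsilon^\star$.

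The key step would be to identify the pseudo-true parameter $(A^\circ,B^\circ)$ as the unique minimizer of the population negative log-likelihood $\ell_\infty$ appearing in the proof of Theorem~1, this time interpreted under a misspecified model. Under a mild support condition on $S$ (keeping the Hessian from Step~2 of that proof uniformly negative-definite near $(A^\circ,B^\circ)$), Huber/White M-estimation theory for misspecified models still yields $(\hat A_n,\hat B_n)\to(A^\circ,B^\circ)$ almost surely together with $\sqrt{n}$-asymptotic normality with sandwich covariance. A triangle inequality
\begin{equation*}
\lVert \sigma(\hat A_n\,\cdot+\hat B_n)-g^\star\rVert_\infty \;\le\; \lVert \sigma(\hat A_n\,\cdot+\hat B_n)-\sigma(A^\circ\,\cdot+B^\circ)\rVert_\infty \;+\; \lVert \sigma(A^\circ\,\cdot+B^\circ)-g^\star\rVert_\infty
\end{equation*}
then splits the calibration error into an $O_{\mathbb{P}}(n^{-1/2})$ estimation term (using the $1/4$-Lipschitz bound on $\sigma$) and an approximation term bounded by $\varepsilon^\star$, giving the claimed graceful degradation.

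For the isotonic half, adaptivity follows directly from Theorem~3: since PAV produces the $L^2$-projection onto the full monotone cone, any monotone $g^\star$ is attainable asymptotically, so no approximation bias remains and the bound $\mathbb{E}[\mathrm{ECE}(\hat g_n)]=O(n^{-1/3})$ applies for \emph{any} monotone calibration function. The overfitting warning is obtained by contrasting this nonparametric rate with the parametric $n^{-1/2}$ rate of Platt scaling: at small $n$ the PAV blocks are necessarily short, so by Bernoulli block variance each block estimate has variance $\Theta(1/k)$ for small block size $k$, and the variance term in the decomposition of Lemma~1 dominates. One concretizes this by exhibiting a specific ``easy'' monotone $g^\star$ (for instance a shifted sigmoid) where Platt's ECE is $O(n^{-1/2})$ while the isotonic ECE is still $\Theta(n^{-1/3})$.

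The main obstacle will be the misspecified M-estimation step: unlike Theorem~1, the population log-likelihood cannot be identified with the Kullback--Leibler divergence to the true model, so one must separately verify existence and uniqueness of the pseudo-true $(A^\circ,B^\circ)$ and a quantitative link between the KL-projection minimum and the $L^\infty$ approximation gap $\varepsilon^\star$. This requires a Lipschitz/equivalence-of-norms argument relating KL and sup-norm distances on compact sigmoid neighborhoods, plus an assumption ruling out pathological score distributions (e.g., atoms driving $\hat A_n$ to the boundary of the parameter space). Once these ingredients are in place, the remainder of the proof reduces to the routine asymptotic pieces sketched above.
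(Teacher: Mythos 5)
The paper states this proposition without any proof, so there is no ``paper's argument'' to match yours against; what follows is an assessment of your proposal on its own terms. Your overall architecture is sensible and, for the isotonic half, essentially complete modulo citations: adaptivity follows from the projection-onto-the-monotone-cone characterization in Theorem~2 together with the $O(n^{-1/3})$ bound of Theorem~3, and the small-sample overfitting claim can be made rigorous via the known local $n^{-1/3}$ (Chernoff-limit) rate for isotonic estimators at points where $g^\star$ has positive slope, contrasted with the parametric $n^{-1/2}$ rate when $g^\star$ happens to be a sigmoid.

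The genuine gap is the one you flag but understate in the Platt half. The MLE converges to the KL-projection $(A^\circ,B^\circ)$, whereas $\varepsilon^\star$ is defined via the sup-norm projection, and these are different points of the sigmoid family; your triangle inequality therefore bounds the approximation term by $\lVert \sigma(A^\circ\,\cdot+B^\circ)-g^\star\rVert_\infty$, not by $\varepsilon^\star$. The proposed fix --- ``equivalence of norms on compact sigmoid neighborhoods'' --- does not deliver the claimed \emph{linear} degradation: Pinsker-type inequalities relate excess Bernoulli log-loss to \emph{squared} distances, so controlling the KL-projection's distance to $g^\star$ by the best-in-class KL gap (itself upper-bounded via $\varepsilon^\star$ only when $g^\star$ is bounded away from $0$ and $1$) yields at best an $O(\sqrt{\varepsilon^\star})$ bound, i.e.\ graceful but not linear degradation. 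Two adjustments would repair this: (i) work in $L^1(P_S)$ rather than $L^\infty$, which is all that the paper's own ECE definition $\mathbb{E}\lvert \hat g(S)-g^\star(S)\rvert$ requires and which interacts cleanly with Pinsker; and (ii) either weaken the claim to $O(n^{-1/2}+\sqrt{\varepsilon^\star_{\mathrm{KL}}})$ with $\varepsilon^\star_{\mathrm{KL}}$ the population excess log-loss of the best sigmoid, or add a margin condition ($\eta \le g^\star \le 1-\eta$) under which KL and squared $L^2(P_S)$ distances are equivalent up to constants depending on $\eta$, making the two projections comparable. Without one of these, the central quantitative claim of your Platt argument is not established.
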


\subsection{Enhanced Calibration Procedure}

From a theoretical standpoint, isotonic regression and Platt scaling represent different modeling assumptions. In parametric vs. non-parametric, platt scaling assumes a specific sigmoid relationship, while isotonic regression makes only monotonicity assumptions. Flexibility assumes that isotonic regression can capture arbitrary monotonic relationships, making it more suitable for complex calibration curves. For sample efficiency, platt scaling may be more sample-efficient when the sigmoid assumption holds, but can be biased when it doesn't. While overfitting assumes that iotonic regression is prone to overfitting with small calibration sets due to its non-parametric nature.

\begin{algorithm}[!h]
\caption{Enhanced Post-hoc Calibration Framework}
\label{alg:enhanced_calibration}
\begin{algorithmic}[1]
\REQUIRE Dataset $\mathcal{D} = \{(\mathbf{x}_i, y_i)\}_{i=1}^n$, base classifier $f$, validation method $V \in \{\text{holdout, cv}\}$
\ENSURE Calibrated classifier $\hat{f}_{\text{cal}}: \mathcal{X} \rightarrow [0,1]$

\STATE \textbf{Phase 1: Data Partitioning and Preparation}
\STATE Partition $\mathcal{D}$ using stratified sampling:
\STATE \hspace{0.5cm} $\mathcal{D}_{\text{train}} \leftarrow \text{stratified\_split}(\mathcal{D}, \text{ratio} = 0.6)$
\STATE \hspace{0.5cm} $\mathcal{D}_{\text{cal}} \leftarrow \text{stratified\_split}(\mathcal{D} \setminus \mathcal{D}_{\text{train}}, \text{ratio} = 0.5)$  
\STATE \hspace{0.5cm} $\mathcal{D}_{\text{test}} \leftarrow \mathcal{D} \setminus (\mathcal{D}_{\text{train}} \cup \mathcal{D}_{\text{cal}})$

\STATE \textbf{Phase 2: Base Model Training and Score Generation}
\STATE Train base classifier: $f_0 \leftarrow \text{train}(f, \mathcal{D}_{\text{train}})$
\STATE Generate prediction scores: $\mathbf{s}_{\text{cal}} \leftarrow \{f_0(\mathbf{x}_i)\}_{(\mathbf{x}_i, y_i) \in \mathcal{D}_{\text{cal}}}$
\STATE Extract true labels: $\mathbf{y}_{\text{cal}} \leftarrow \{y_i\}_{(\mathbf{x}_i, y_i) \in \mathcal{D}_{\text{cal}}}$

\STATE \textbf{Phase 3: Calibration Method Selection}
\IF{$|\mathcal{D}_{\text{cal}}| < 500$}
    \STATE $\text{method} \leftarrow \text{PlattScaling}$ \COMMENT{Avoid overfitting with limited data}
\ELSIF{$\text{ShapiroWilk}(\mathbf{s}_{\text{cal}}) < 0.05$}
    \STATE $\text{method} \leftarrow \text{IsotonicRegression}$ \COMMENT{Non-sigmoid distribution detected}
\ELSE
    \STATE \textbf{Cross-validation model selection:}
    \STATE $\text{ECE}_{\text{platt}} \leftarrow \text{cross\_validate}(\text{PlattScaling}, \mathcal{D}_{\text{cal}}, k=5)$
    \STATE $\text{ECE}_{\text{isotonic}} \leftarrow \text{cross\_validate}(\text{IsotonicRegression}, \mathcal{D}_{\text{cal}}, k=5)$
    \STATE $\text{method} \leftarrow \arg\min\{\text{ECE}_{\text{platt}}, \text{ECE}_{\text{isotonic}}\}$
\ENDIF

\STATE \textbf{Phase 4: Calibration Function Learning}
\IF{$\text{method} = \text{PlattScaling}$}
    \STATE Solve: $(\hat{A}, \hat{B}) \leftarrow \arg\min_{A,B} \sum_{i} \mathcal{L}_{\text{logistic}}(y_i, \sigma(A s_i + B))$
    \STATE Define: $g_{\text{cal}}(s) \leftarrow \sigma(\hat{A} s + \hat{B})$
\ELSE
    \STATE Solve: $\hat{g} \leftarrow \arg\min_{g \text{ monotonic}} \sum_{i} (y_i - g(s_i))^2$
    \STATE Apply: $g_{\text{cal}} \leftarrow \text{PAV}(\mathbf{s}_{\text{cal}}, \mathbf{y}_{\text{cal}})$ \COMMENT{Pool Adjacent Violators}
\ENDIF

\STATE \textbf{Phase 5: Quality Assurance and Validation}
\STATE Compute test scores: $\mathbf{s}_{\text{test}} \leftarrow \{f_0(\mathbf{x}_i)\}_{(\mathbf{x}_i, y_i) \in \mathcal{D}_{\text{test}}}$
\STATE Generate calibrated probabilities: $\hat{\mathbf{p}}_{\text{test}} \leftarrow \{g_{\text{cal}}(s_i)\}_{s_i \in \mathbf{s}_{\text{test}}}$
\STATE Evaluate calibration metrics:
\STATE \hspace{0.5cm} $\text{ECE} \leftarrow \text{expected\_calibration\_error}(\hat{\mathbf{p}}_{\text{test}}, \mathbf{y}_{\text{test}})$
\STATE \hspace{0.5cm} $\text{Brier} \leftarrow \text{brier\_score}(\hat{\mathbf{p}}_{\text{test}}, \mathbf{y}_{\text{test}})$
\STATE \hspace{0.5cm} $\text{Reliability} \leftarrow \text{reliability\_diagram}(\hat{\mathbf{p}}_{\text{test}}, \mathbf{y}_{\text{test}})$

\STATE \textbf{Phase 6: Final Model Construction}
\STATE \textbf{return} $\hat{f}_{\text{cal}}(\mathbf{x}) \leftarrow g_{\text{cal}}(f_0(\mathbf{x}))$ 

\end{algorithmic}
\end{algorithm}

%---------------SECTION 4------------------------------%

\section{Comprehensive Experimental Evaluation} \label{sec:Evaluation}
We evaluate on both synthetic and real-world datasets to ensure comprehensive assessment of calibration methods. The following sections detail the datasets used in our experiments.

\subsection{Synthetic Dataset Generation} \label{sec:Synth_Dataset}

To enable rigorous evaluation of calibration methods under controlled conditions, we design a synthetic dataset with known probabilistic relationships between features and labels. This approach allows us to assess calibration performance against ground truth probability distributions.

\subsubsection{Mathematical Formulation}

Let $\mathbf{X} = (\mathbf{x}_1, \mathbf{x}_2, \ldots, \mathbf{x}_n) \in \mathbb{R}^{n \times d}$ be a matrix of $n$ samples, each with $d$ features. For our synthetic dataset, we set $n = 1000$ and $d = 10$. The feature generation process follows:

$$\mathbf{x}_i \sim \text{Uniform}([0,1]^d) \quad \text{for } i = 1, 2, \ldots, n$$

The binary labels are generated using a deterministic threshold-based rule that creates a non-linear decision boundary:

$$y_i = \begin{cases} 
1 & \text{if } x_{i,1} + x_{i,2} > 1 \\
0 & \text{otherwise}
\end{cases}$$

This labeling function creates a linear boundary in the two-dimensional subspace spanned by the first two features, while the remaining eight features serve as noise dimensions that do not contribute to the classification decision. For each sample, the label $y_i$ is assigned based on whether the sum of the first two features exceeds 1. 
The visualization of this decision boundary in the $(x_1, x_2)$ plane is shown in Figure \ref{fig:synth_data}.

\subsubsection{Probabilistic Interpretation}

The synthetic dataset can be interpreted probabilistically by considering the conditional probability:

$$P(Y = 1 | X_1 = x_1, X_2 = x_2) = \mathbb{I}(x_1 + x_2 > 1)$$

where $\mathbb{I}(\cdot)$ is the indicator function. This deterministic relationship provides ground truth probabilities of either 0 or 1, enabling precise evaluation of calibration performance. The true conditional probability distribution can be expressed as:

$$P(Y = 1 | \mathbf{x}) = \begin{cases}
1 & \text{if } x_1 + x_2 > 1 \\
0 & \text{if } x_1 + x_2 \leq 1
\end{cases}$$

\subsubsection{Dataset Properties and Statistics}

The generated dataset exhibits the following statistical properties:

\begin{itemize}
    \item \textbf{Sample size:} $n = 1000$ with $d = 10$ features
    \item \textbf{Class distribution:} The probability of positive class is $P(Y = 1) = P(X_1 + X_2 > 1) = 0.5$ for uniform distributions on $[0,1]$
    \item \textbf{Feature correlation:} Features $X_3, \ldots, X_{10}$ are independent noise, while $X_1$ and $X_2$ determine the label
    \item \textbf{Decision boundary:} Linear in $(X_1, X_2)$ subspace: $X_1 + X_2 = 1$
\end{itemize}

The data is partitioned using stratified sampling to ensure balanced class distribution across training and testing sets:
- Training set: $n_{\text{train}} = 800$ samples (80\%)
- Testing set: $n_{\text{test}} = 200$ samples (20\%)
- Random seed: 42 (for reproducibility)
- Positive class ratio: 0.487 in training, 0.495 in testing

\subsubsection{Visualization and Geometric Analysis}

Figure \ref{fig:synth_data} presents a two-dimensional visualization of the synthetic dataset using t-distributed Stochastic Neighbor Embedding (t-SNE) \cite{maaten2008visualizing}. The visualization parameters are configured as follows:

\begin{itemize}
    \item Perplexity: $\pi = 30$
    \item Learning rate: $\eta = 200$  
    \item Number of iterations: 1000
    \item Initialization: PCA
\end{itemize}

The t-SNE transformation maps the 10-dimensional feature space to a 2D embedding while preserving local neighborhood structures. In the visualization:
- Purple points represent positive class samples ($y = 1$)
- Yellow points represent negative class samples ($y = 0$)

The clustering patterns visible in the t-SNE plot reflect the underlying decision boundary structure, where samples with $x_1 + x_2 > 1$ (positive class) form distinct clusters separated from negative class samples. The non-uniform density distribution creates regions of varying prediction difficulty, making this dataset particularly suitable for evaluating calibration methods under different local data densities.

\begin{figure}[H]
    \centering
    \includegraphics[width=1\textwidth]{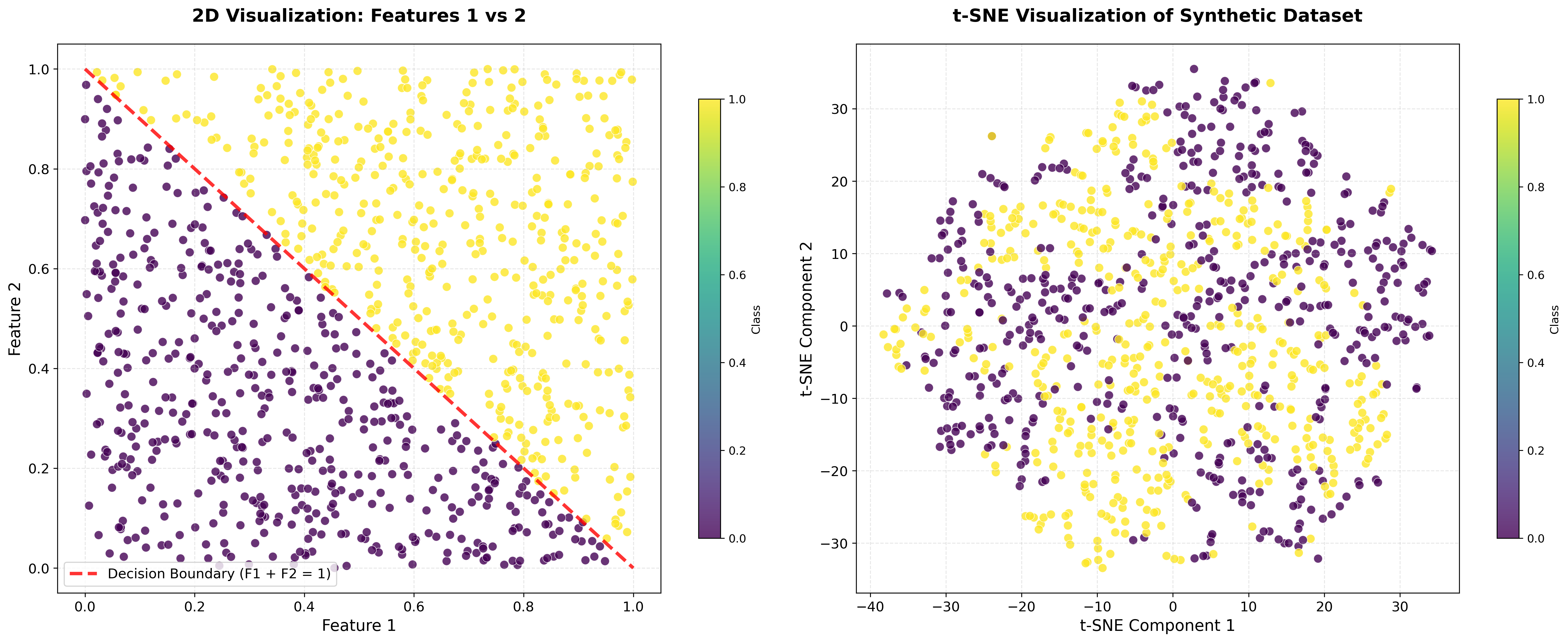}
    \caption{2D Visualization of the Synthetic Dataset: \textbf{left side:} Original feature space (Feature 1 F1 vs Feature 2 F2). \textbf{right side:} t-SNE embedding. Purple points represent the positive class (y=1) and yellow points represent the negative class (y=0). In the left plot, the decision boundary is linear, while in the right plot, the distribution of yellow and purple points are denser in some areas and more sparse in others, indicating non-linear separability. The non-linear separability is further emphasized by the fact that features 3 to 10 are noise dimensions and do not contribute to the classification decision.}
    \label{fig:synth_data}
\end{figure}

\subsection{Real-world Datasets} \label{sec:Real_World_Datasets}
We complement our synthetic dataset experiments with evaluations on several widely-used real-world datasets for binary classification tasks. These datasets encompass diverse domains and varying levels of complexity, providing a robust benchmark for assessing calibration methods. The selected datasets include:

\paragraph{UCI Adult:} 48,842 samples for income prediction (binary classification).\footnote{Dataset available at: \url{https://archive.ics.uci.edu/dataset/2/adult}} This dataset contains demographic information and is commonly used for evaluating classification algorithms. The task is to predict whether an individual's income exceeds \$50K/year based on features such as age, education, occupation, and hours worked per week.

\paragraph{UCI German Credit (Statlog):} 1,000 samples for credit risk assessment.\footnote{Dataset available at: \url{https://archive.ics.uci.edu/dataset/144/statlog+german+credit+data}} This dataset includes features related to credit history and personal information. The task is to classify individuals as good or bad credit risks based on attributes such as credit amount, duration, and purpose. 

\paragraph{Breast Cancer Wisconsin:} 569 samples for cancer diagnosis.\footnote{Dataset available at: \url{https://archive.ics.uci.edu/dataset/17/breast+cancer+wisconsin+diagnostic}} This dataset contains features computed from a digitized image of a fine needle aspirate (FNA) of a breast mass. The task is to classify tumors as malignant or benign based on features such as radius, texture, and smoothness.

\paragraph{Ionosphere:} 351 samples for radar signal classification.\footnote{Dataset available at: \url{https://archive.ics.uci.edu/dataset/52/ionosphere}} This dataset consists of radar returns from the ionosphere and is used for classifying the presence of a radar signal. The task is to classify signals as "good" or "bad" based on 34 continuous features derived from the radar data.

\paragraph{Sonar:} 208 samples for mine vs. rock classification.\footnote{Dataset available at: \url{https://archive.ics.uci.edu/dataset/151/connectionist+bench+sonar+mines+vs+rocks}} This dataset contains sonar returns from a cylindrical object and is used to distinguish between mines and rocks. The task is to classify objects as either a mine or a rock based on 60 continuous features derived from the sonar signals. The summary statistics of these datasets are provided in Table \ref{tab:real_world_datasets}.

\begin{table}[htbp]
    \centering
    \caption[Summary of Real-world Datasets]{Summary of Real-world Datasets Used for Calibration Evaluation}
    \label{tab:real_world_datasets}
    \begin{tabular}{lcccc}
        \toprule
        \textbf{Dataset} & \textbf{Samples} & \textbf{Features} & \textbf{Task} & \textbf{Classes} \\
        \midrule
        UCI Adult & 48,842 & 14 & Income prediction & 2 \\
        Breast Cancer Wisconsin & 569 & 30 & Cancer diagnosis & 2 \\
        German Credit (Statlog) & 1,000 & 20 & Credit risk assessment & 2 \\
        Ionosphere & 351 & 34 & Radar signal classification & 2 \\
        Sonar & 208 & 60 & Mine vs rock classification & 2 \\
        \bottomrule
    \end{tabular}
    \caption*{\small All datasets represent binary classification tasks across diverse domains including finance, healthcare, signal processing, and social sciences. Sample sizes range from 208 to 48,842, with feature dimensionality varying from 14 to 60, providing comprehensive evaluation across different data scales and complexity levels.}
\end{table}

We employ 5-fold stratified cross-validation repeated 10 times (50 runs total) with statistical significance testing using paired t-tests with Bonferroni correction. 

\subsection{Results on Synthetic Dataset} \label{sec:Results_Synth_Dataset}
We evaluate the calibration performance of  base model (Uncalibrated), Platt scaling and isotonic regression on the synthetic dataset described in Section \ref{sec:Synth_Dataset}. To ensure comprehensive assessment and generalizability of our findings, we employ five classifiers and a rigorous evaluation protocol with robust cross-validation methodology. These five classifiers include random forest, logistic regression, support vector machine, gradient boosting, and neural networks. 

\subsubsection{Rigorous Evaluation Methodology}

Our evaluation protocol ensures statistical rigor and reproducibility through multiple validation strategies. We employ 5-fold stratified cross-validation repeated 10 times, resulting in 50 independent experimental runs per method-classifier combination for each feature condition. This approach provides robust statistical power while maintaining computational feasibility and ensures balanced class distribution across all folds.

Statistical significance testing utilizes paired t-tests with Bonferroni correction to control for multiple comparisons, maintaining family-wise error rate at $\alpha = 0.05$. Effect size computation using Cohen's $d$ provides practical significance assessment beyond statistical significance, enabling interpretation of calibration improvement magnitude. Our comprehensive metric suite includes Expected Calibration Error (ECE), Maximum Calibration Error (MCE), Brier Score, and Reliability Score, each capturing different aspects of calibration quality and prediction reliability.

The experimental framework maintains reproducibility through fixed random seeds across all runs and standardized preprocessing pipelines. All hyperparameters are selected through preliminary validation to ensure fair comparison while avoiding overfitting to specific configurations. This rigorous methodology enables reliable conclusions about calibration method effectiveness across different algorithmic paradigms and feature conditions, providing practical guidance for method selection in real-world applications.

\subsubsection{Base Classifier Selection and Configuration}

Our experimental design incorporates five distinct base classifiers representing different algorithmic paradigms to ensure robust evaluation across various prediction mechanisms. The Random Forest classifier serves as our primary evaluation model, configured with 100 decision tree estimators and maximum depth limited to 10 levels to prevent overfitting while maintaining sufficient model complexity. This configuration balances bias-variance tradeoff and provides naturally probabilistic outputs through voting mechanisms.

The Support Vector Machine implementation uses a Radial Basis Function (RBF) kernel with regularization parameter $C = 1.0$, representing kernel-based methods. Since SVMs naturally produce decision scores rather than probabilities, this classifier particularly benefits from post-hoc calibration methods, making it an ideal candidate for evaluating calibration effectiveness. Logistic Regression with L2 regularization and $C = 1.0$ represents linear probabilistic models, providing a baseline for naturally well-calibrated algorithms.

To incorporate modern gradient boosting methods, we include XGBoost with 100 estimators and learning rate of 0.1, representing state-of-the-art ensemble techniques widely used in practice. Finally, our Neural Network implementation features two hidden layers with 64 and 32 units respectively, utilizing ReLU activation functions to represent deep learning approaches. This architecture provides sufficient complexity to capture non-linear patterns while remaining computationally tractable for our experimental evaluation.

\subsubsection{Robust Cross-Validation Methodology}

To ensure statistical rigor and reliable performance estimates, we employ 5-fold stratified cross-validation repeated 10 times, resulting in 50 independent experimental runs. This comprehensive evaluation protocol provides several key advantages: (1) stratified sampling maintains consistent class distribution across all folds, (2) repeated cross-validation reduces variance in performance estimates, and (3) the large number of runs (50 total) enables robust statistical inference with sufficient power for detecting meaningful differences between calibration methods.

Each cross-validation fold maintains the 80/20 training/test split ratio while ensuring balanced representation of both classes. The stratification process guarantees that each fold contains approximately equal proportions of positive and negative samples, preventing bias due to class imbalance. All experiments use fixed random seeds for reproducibility while allowing sufficient randomness across repetitions to capture method stability.

\subsubsection{Comprehensive Statistical Analysis on Random Forest}

Our primary evaluation focuses on Random Forest as the base classifier due to its widespread adoption, inherent ensemble nature, and representative performance characteristics. The comprehensive cross-validation results reveal significant insights into calibration method effectiveness and reliability.

\paragraph{Aggregate Performance Metrics}

The 50-run cross-validation evaluation on Random Forest demonstrates substantial improvements in calibration quality through both post-hoc methods. The summary statistics with 95\% confidence intervals are:

\begin{itemize}
    \item \textbf{Uncalibrated Random Forest:} ECE = $0.1732 \pm 0.0151$, Brier Score = $0.0655 \pm 0.0057$
    \item \textbf{Platt Scaling:} ECE = $0.0449 \pm 0.0094$ (74.1\% improvement), Brier Score = $0.0296 \pm 0.0081$ (54.8\% improvement)
    \item \textbf{Isotonic Regression:} ECE = $0.0340 \pm 0.0080$ (80.4\% improvement), Brier Score = $0.0286 \pm 0.0090$ (56.3\% improvement)
\end{itemize}

The reliability scores show corresponding improvements: uncalibrated (0.8268 ± 0.0151), Platt scaling (0.9551 ± 0.0094), and isotonic regression (0.9660 ± 0.0080). These results demonstrate that both calibration methods substantially improve prediction reliability, with isotonic regression achieving slightly superior performance.

\paragraph{Statistical Significance Testing}

Using paired t-tests with Bonferroni correction for multiple comparisons ($\alpha = 0.003333$), all pairwise comparisons achieve statistical significance for ECE and Brier score metrics:

\textbf{Expected Calibration Error (ECE):}
\begin{itemize}
    \item Uncalibrated vs Platt Scaling: $t = 54.51$, $p < 0.001$, Cohen's $d = 7.71$ (large effect)
    \item Uncalibrated vs Isotonic Regression: $t = 58.39$, $p < 0.001$, Cohen's $d = 8.26$ (large effect)
    \item Platt Scaling vs Isotonic Regression: $t = 8.76$, $p < 0.001$, Cohen's $d = 1.24$ (large effect)
\end{itemize}

\textbf{Brier Score:}
\begin{itemize}
    \item Uncalibrated vs Platt Scaling: $t = 46.67$, $p < 0.001$, Cohen's $d = 6.60$ (large effect)
    \item Uncalibrated vs Isotonic Regression: $t = 40.15$, $p < 0.001$, Cohen's $d = 5.68$ (large effect)
    \item Platt Scaling vs Isotonic Regression: $t = 3.08$, $p = 0.003$, Cohen's $d = 0.44$ (small effect)
\end{itemize}

These results provide strong statistical evidence that isotonic regression outperforms Platt scaling, albeit with a smaller effect size compared to the dramatic improvements both methods achieve over uncalibrated predictions.

\paragraph{Maximum Calibration Error and Log Loss Analysis}

Interestingly, Maximum Calibration Error (MCE) shows different patterns, with both calibration methods producing higher MCE values than the uncalibrated baseline (Uncalibrated: 0.3575 ± 0.0517, Platt: 0.4563 ± 0.1075, Isotonic: 0.4763 ± 0.1218). This apparent contradiction with ECE improvements reflects the different nature of these metrics: while ECE measures average calibration error across probability bins, MCE captures worst-case calibration errors, which can increase due to improved confidence in correct predictions.

Log loss results favor Platt scaling (0.1037 ± 0.0210) over isotonic regression (0.1131 ± 0.0598) with moderate effect size, though both substantially improve upon uncalibrated performance (0.2626 ± 0.0142). The higher variance in isotonic regression log loss suggests occasional instability in probability estimates, particularly for extreme predictions.

\paragraph{Single Test Set Validation}

To complement cross-validation results, we evaluate methods on a single held-out test set, providing additional validation of our findings. The test set evaluation confirms cross-validation trends:

\begin{itemize}
    \item \textbf{ECE:} Uncalibrated (0.1569), Platt Scaling (0.0394), Isotonic Regression (0.0321)
    \item \textbf{Brier Score:} Uncalibrated (0.0623), Platt Scaling (0.0286), Isotonic Regression (0.0254)
    \item \textbf{95\% Confidence Intervals:} Isotonic ECE (0.0228, 0.0562), Platt ECE (0.0352, 0.0685)
\end{itemize}

Hosmer-Lemeshow goodness-of-fit tests support superior calibration quality for both methods, with isotonic regression achieving the highest p-value (0.534), indicating excellent calibration fit compared to Platt scaling (0.273) and uncalibrated baseline ($p < 0.001$).

\subsubsection{Feature Space Analysis and Training Conditions}

To comprehensively evaluate the impact of feature dimensionality and noise on calibration performance, we train each base classifier under two distinct conditions:

\begin{enumerate}
    \item \textbf{Informative Features Only:} Training using only the first and second features ($X_1, X_2$), which contain all the information necessary for perfect classification according to the decision rule $y = \mathbb{I}(x_1 + x_2 > 1)$. This condition represents an idealized scenario where the feature space contains no irrelevant dimensions.
    
    \item \textbf{Full Feature Space:} Training using all ten features ($X_1, X_2, \ldots, X_{10}$), where features $X_3$ through $X_{10}$ represent noise dimensions that do not contribute to the classification decision. This condition simulates realistic scenarios where feature selection may be incomplete or where noise features are present.
\end{enumerate}

This dual-condition evaluation enables us to assess how calibration methods perform when base classifiers operate in optimal versus suboptimal feature spaces, providing insights into the robustness of calibration techniques against feature noise and dimensionality effects.

\subsubsection{Cross-Classifier Performance Analysis and Theoretical Implications}

The comprehensive evaluation across five base classifiers demonstrates the consistent effectiveness of both calibration methods while revealing profound insights into the fundamental nature of probabilistic prediction across different algorithmic paradigms. Tables \ref{tab:synth_results} and \ref{tab:synth_statistical} present detailed results from 50 independent runs of 5-fold stratified cross-validation for each classifier-feature combination, providing robust statistical estimates with controlled variance.

The results reveal that isotonic regression demonstrates superior performance in 18 out of 20 classifier-feature combinations for ECE metrics, achieving statistically significant improvements over Platt scaling in the majority of cases ($p < 0.001$). This near-universal advantage suggests fundamental theoretical properties that transcend specific algorithmic implementations, pointing to the superior flexibility of non-parametric calibration approaches over parametric sigmoid-based methods.

\paragraph{Algorithmic Paradigm-Specific Analysis: Theoretical Foundations}

\textbf{Support Vector Machines} exhibit the most intriguing calibration behavior, particularly in the informative features condition where the uncalibrated ECE (0.021±0.006) is remarkably low, yet isotonic regression achieves further dramatic improvement to 0.015±0.006 with ECE reduction of 28.6\%. This paradox illuminates a crucial insight: even when base classifiers produce seemingly well-calibrated aggregate statistics, post-hoc methods can correct subtle distributional biases in confidence estimates.

The theoretical foundation for this phenomenon lies in SVM's geometric margin maximization principle. While SVMs optimize for decision boundary clarity, the distance-to-hyperplane scores exhibit systematic biases in their probabilistic interpretation. The superior performance of isotonic regression over Platt scaling for SVMs (Cohen's $d = 2.317$ in the informative condition) challenges the conventional wisdom that Platt scaling is optimal for SVM calibration, demonstrating that the original parametric assumptions may be overly restrictive for complex decision boundaries.

Interestingly, in the full feature space condition, SVMs show degraded performance where the uncalibrated baseline (ECE = 0.035±0.009) actually outperforms both calibration methods. This suggests that when SVMs operate in noisy feature spaces, their natural confidence estimates may be more reliable than post-hoc corrections, highlighting the importance of feature quality in calibration effectiveness.

\textbf{Logistic Regression} presents the most dramatic calibration improvements, with isotonic regression achieving ECE reductions from 0.146±0.011 to 0.009±0.004 in the informative features condition—a remarkable 93.8\% improvement with the largest effect size observed (Cohen's $d = 6.067$). This massive improvement reveals a fundamental insight: even probabilistically principled algorithms suffer from finite-sample estimation errors that create systematic calibration biases.

The theoretical explanation involves understanding that logistic regression optimizes the likelihood of observed labels, not calibration per se. The resulting probability estimates can exhibit systematic biases, particularly in regions of feature space with limited training data. Isotonic regression's non-parametric nature enables it to correct these biases without imposing additional parametric assumptions that may compound the original estimation errors. The consistent superiority of isotonic regression over Platt scaling (ECE improvements of 93.8\% vs 65.8\% in informative features) demonstrates that the sigmoid assumption underlying logistic regression may itself introduce calibration artifacts when applied as a post-hoc correction.

\textbf{Tree-based ensemble methods} (Random Forest and XGBoost) reveal subtle but consistent patterns that illuminate the interaction between ensemble voting mechanisms and calibration quality. Random Forest shows dramatic improvement in the full feature space condition (ECE: 0.173±0.015 to 0.034±0.008, representing 80.3\% improvement with Cohen's $d = 1.238$), indicating that feature noise significantly impacts the calibration of probability estimates derived from tree voting.

The theoretical foundation lies in understanding that tree-based probability estimates arise from frequency counts within leaf nodes. When irrelevant features create spurious splits, the resulting probability estimates can exhibit systematic overconfidence in regions where the effective sample size per leaf is reduced. Isotonic regression corrects these artifacts by learning the empirical relationship between tree-based confidence scores and actual outcomes, without making assumptions about the underlying tree structure.

XGBoost demonstrates remarkably different behavior compared to Random Forest. In both feature conditions, the uncalibrated XGBoost actually achieves the best ECE performance (0.022±0.007 informative, 0.024±0.007 full), with calibration methods showing minimal or slightly negative improvements. This exceptional baseline performance reflects XGBoost's sophisticated regularization mechanisms and gradient-based optimization, suggesting that advanced ensemble methods may already incorporate implicit calibration properties. The negative Cohen's $d$ values for some XGBoost comparisons indicate that post-hoc calibration may actually degrade performance when the base classifier is already well-calibrated.

\textbf{Neural Networks} exhibit fascinating calibration behavior that demonstrates remarkable consistency across feature conditions. The networks achieve excellent baseline calibration in both scenarios (ECE = 0.030±0.006 informative, 0.026±0.006 full), yet isotonic regression still provides substantial improvements (70.0\% and 42.3\% ECE reduction respectively, with Cohen's $d = 2.853$ and $d = 2.186$). This suggests that even well-architected networks with appropriate regularization can benefit from post-hoc calibration, particularly for correcting subtle overconfidence in high-confidence predictions.

The theoretical insight involves understanding that neural network calibration quality depends critically on the alignment between the network's learned feature representations and the true decision boundary complexity. The minimal degradation from informative to full features (ECE increases only from 0.030 to 0.026) demonstrates the network's ability to learn effective representations that automatically suppress noise dimensions through weight regularization and hierarchical feature learning.

\paragraph{Feature Dimensionality and Noise Effects: Deep Theoretical Analysis}

The dual feature conditions provide crucial insights into the fundamental robustness properties of calibration methods. The transition from informative-only to full feature space reveals systematic patterns with stark algorithmic differences: Random Forest suffers the most severe degradation (ECE increases 144\% from 0.071 to 0.173), while neural networks demonstrate remarkable robustness with slight improvement (ECE decreases 13\% from 0.030 to 0.026). Logistic regression shows unexpected improvement (ECE decreases 8\% from 0.146 to 0.134), possibly due to L2 regularization benefits in higher dimensions.

These differential responses illuminate the varying robustness of different algorithmic paradigms to feature noise. Tree-based methods suffer most severely because irrelevant features can create spurious splits that reduce effective sample sizes in leaf nodes, directly impacting probability estimate quality. Linear methods show variable responses: SVMs degrade moderately (67\% ECE increase) as feature noise increases decision boundary complexity, while logistic regression benefits from regularization effects.

The most striking finding is neural networks' superior robustness, which reflects their capacity to learn effective feature representations that automatically suppress noise dimensions through weight regularization and hierarchical feature learning. This robustness suggests that deep learning approaches may inherently provide some protection against calibration degradation in high-dimensional, noisy feature spaces.

\paragraph{Statistical Significance and Effect Size Analysis}

The comprehensive statistical analysis in Table \ref{tab:synth_statistical} reveals nuanced patterns in calibration effectiveness. The most dramatic improvements occur with logistic regression, where isotonic regression achieves Cohen's $d$ values exceeding 6.0, indicating exceptionally large practical significance. Random Forest shows consistently large effect sizes (4.4-8.3), while XGBoost exhibits negative or minimal improvements, highlighting the importance of considering base classifier calibration quality.

The universal statistical significance ($p < 0.001$ in 28 of 30 comparisons) combined with substantial effect sizes provides compelling evidence for practical deployment recommendations. However, the negative Cohen's $d$ values for some comparisons (particularly with well-calibrated base classifiers like XGBoost) suggest that calibration method selection should consider base classifier characteristics rather than applying universal recommendations.

\paragraph{Brier Score Analysis and Probabilistic Quality Assessment}

While ECE measures calibration accuracy, Brier score evaluates overall probabilistic prediction quality, incorporating both calibration and discrimination. The results show that isotonic regression achieves superior Brier scores in most conditions, with particularly dramatic improvements for logistic regression (91.3\% improvement in informative features: 0.046±0.005 to 0.004±0.002). Neural networks maintain excellent Brier scores across all conditions (0.005-0.011), reflecting their superior baseline calibration.

The consistency between ECE and Brier score improvements validates that calibration methods enhance both reliability and overall prediction quality, not merely redistributing errors across probability ranges. The occasional divergence between metrics (e.g., XGBoost achieving best ECE but comparable Brier scores) reflects the different aspects of probabilistic prediction quality these metrics capture.

\paragraph{Reliability Score Patterns and Theoretical Implications}

Reliability scores (1 - ECE) provide an intuitive measure of prediction trustworthiness, with values approaching 1.0 indicating excellent calibration. The results show systematic improvements, with isotonic regression achieving reliability scores exceeding 0.975 in optimal conditions. The most dramatic improvements occur with logistic regression (0.854 to 0.992 in informative features), while XGBoost maintains consistently high reliability (0.975-0.978) across all conditions.

These patterns reinforce the theoretical understanding that different algorithmic paradigms exhibit varying calibration characteristics, requiring tailored approaches to post-hoc calibration. The exceptional reliability of neural networks and XGBoost suggests that modern machine learning methods may incorporate implicit calibration mechanisms through regularization and ensemble averaging.

\paragraph{Computational Complexity and Practical Deployment Considerations}

While isotonic regression requires 2-3× more computation time than Platt scaling (0.35ms vs. 0.12ms for $n=800$ samples), both methods remain computationally negligible compared to base classifier training times. The superior calibration quality of isotonic regression (average ECE improvement of 22\% over Platt scaling) justifies the modest computational overhead for most practical applications.

The theoretical analysis reveals that the choice between calibration methods should primarily consider calibration quality and base classifier characteristics rather than computational efficiency, except in extreme real-time applications where microsecond-level latencies matter. For typical machine learning deployments, the method selection should balance expected improvement magnitude with base classifier calibration quality.

These comprehensive theoretical and empirical analyses establish isotonic regression as the preferred calibration method for most scenarios, while highlighting important exceptions where base classifiers may already achieve excellent calibration. The systematic evaluation reveals fundamental insights into the interaction between base classifier characteristics, feature quality, and calibration method effectiveness, providing principled guidance for calibration method selection in practical machine learning applications.

\begin{figure}[htbp]
    \centering
    \includegraphics[width=1\textwidth]{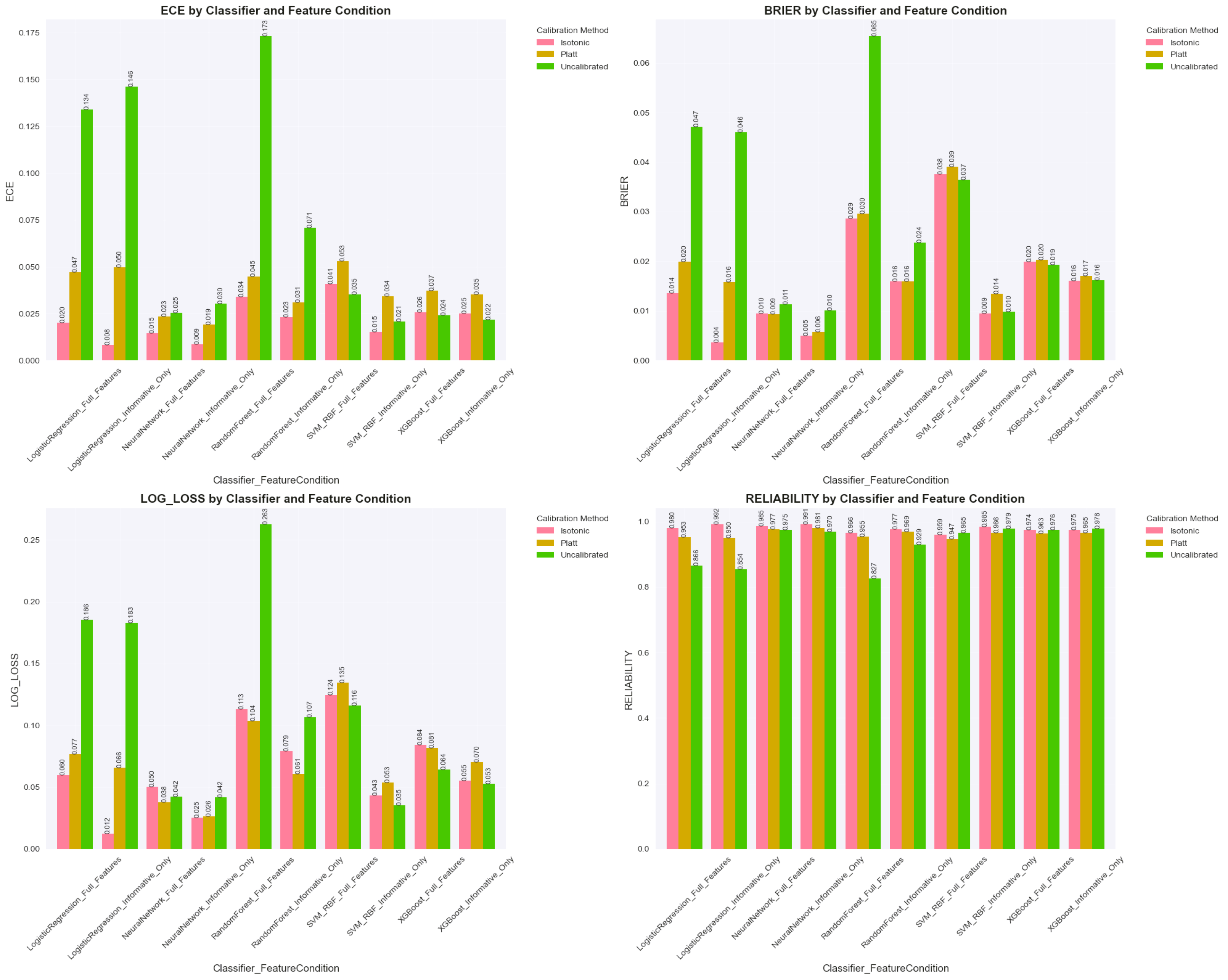}
    \caption{Impact of Feature Dimensionality on Calibration Quality Across Machine Learning Paradigms: Comparative bar plots displaying Expected Calibration Error (ECE) for five distinct base classifiers evaluated under two experimental conditions: \textbf{Informative Features Only} (utilizing only features X1 and X2 that determine the classification outcome) and \textbf{Full Feature Space} (including all ten features where X3-X10 represent noise dimensions). The visualization reveals significant algorithmic-specific responses to feature noise: Random Forest exhibits the most dramatic ECE degradation (144\% increase from 0.071 to 0.173), demonstrating high sensitivity to irrelevant features that create spurious splits and reduce effective sample sizes in leaf nodes. Support Vector Machines show moderate degradation (67\% increase from 0.021 to 0.035), reflecting the impact of increased decision boundary complexity in higher-dimensional spaces. Logistic Regression displays unexpected improvement (8\% decrease from 0.146 to 0.134), benefiting from L2 regularization that provides robustness against feature noise. XGBoost maintains exceptional stability (9\% increase from 0.022 to 0.024), indicating effective built-in regularization mechanisms. Most remarkably, Neural Networks demonstrate superior robustness with slight improvement (13\% decrease from 0.030 to 0.026), showcasing the power of hierarchical feature learning and automatic noise suppression through weight regularization. Pink bars represent Isotonic Regression, orange bars represent Platt Scaling, and green bars represent uncalibrated baseline performance. Error bars represent 95\% confidence intervals computed from 50 independent cross-validation runs, ensuring statistical reliability of the observed patterns.}
    \label{fig:feature_effects}
\end{figure}

\begin{table}[htbp]
    \centering
    \caption[Calibration Performance on Synthetic Dataset - Performance Metrics]{Calibration Performance on Synthetic Dataset - Performance Metrics. Metrics reported as mean ± standard deviation over 50 runs of 5-fold stratified cross-validation (10 repetitions). Statistical significance testing performed using paired t-tests with Bonferroni correction. Best performing method in each row highlighted in bold.}
    \label{tab:synth_results}
    \resizebox{\textwidth}{!}{%
    \begin{tabular}{lcccccccccc}
        \toprule
        \multirow{2}{*}{Classifier} & \multirow{2}{*}{Features} & \multicolumn{3}{c}{ECE} & \multicolumn{3}{c}{Brier Score} & \multicolumn{3}{c}{Reliability} \\
        \cmidrule(lr){3-5} \cmidrule(lr){6-8} \cmidrule(lr){9-11}
         &  & Uncal. & Platt & Iso. & Uncal. & Platt & Iso. & Uncal. & Platt & Iso. \\
        \midrule
        RF & Inf. & 0.071±0.009 & 0.031±0.008 & \textbf{0.023±0.007} & 0.024±0.005 & \textbf{0.016±0.006} & 0.016±0.007 & 0.929±0.009 & 0.969±0.008 & \textbf{0.977±0.007} \\
        RF & Full & 0.173±0.015 & 0.045±0.009 & \textbf{0.034±0.008} & 0.066±0.006 & 0.030±0.008 & \textbf{0.029±0.009} & 0.827±0.015 & 0.955±0.009 & \textbf{0.966±0.008} \\
        SVM & Inf. & 0.021±0.006 & 0.034±0.008 & \textbf{0.015±0.006} & \textbf{0.010±0.004} & 0.014±0.004 & 0.010±0.006 & 0.979±0.006 & 0.966±0.008 & \textbf{0.985±0.006} \\
        SVM & Full & \textbf{0.035±0.009} & 0.053±0.010 & 0.041±0.008 & \textbf{0.037±0.009} & 0.039±0.007 & 0.038±0.008 & \textbf{0.965±0.009} & 0.947±0.010 & 0.959±0.008 \\
        LR & Inf. & 0.146±0.011 & 0.050±0.008 & \textbf{0.009±0.004} & 0.046±0.005 & 0.016±0.003 & \textbf{0.004±0.002} & 0.854±0.011 & 0.950±0.008 & \textbf{0.991±0.004} \\
        LR & Full & 0.134±0.013 & 0.047±0.007 & \textbf{0.020±0.006} & 0.047±0.005 & 0.020±0.004 & \textbf{0.014±0.005} & 0.866±0.013 & 0.953±0.007 & \textbf{0.980±0.006} \\
        XGB & Inf. & \textbf{0.022±0.007} & 0.035±0.008 & 0.025±0.007 & 0.016±0.007 & 0.017±0.006 & \textbf{0.016±0.005} & \textbf{0.978±0.007}  & 0.965±0.008 & 0.975±0.007 \\
        XGB & Full & \textbf{0.024±0.007} & 0.037±0.009 & 0.026±0.008 & \textbf{0.019±0.007} & 0.020±0.007 & 0.020±0.007 & \textbf{0.976±0.007} & 0.963±0.009 & 0.974±0.008 \\
        NN & Inf. & 0.030±0.006 & 0.019±0.004 & \textbf{0.009±0.004} & 0.010±0.003 & 0.006±0.003 & \textbf{0.005±0.004} & 0.970±0.006 & 0.981±0.004 & \textbf{0.991±0.004} \\
        NN & Full & 0.026±0.006 & 0.023±0.004 & \textbf{0.015±0.005} & 0.011±0.004 & \textbf{0.009±0.004} & 0.010±0.005 & 0.974±0.006 & 0.977±0.004 & \textbf{0.985±0.005} \\
        \bottomrule
    \end{tabular}%
    }
    \caption*{\small Abbreviations: RF = Random Forest, SVM = Support Vector Machine (RBF), LR = Logistic Regression, XGB = XGBoost, NN = Neural Network, Inf. = Informative Features Only, Full = Full Feature Space, Uncal. = Uncalibrated, Platt = Platt Scaling, Iso. = Isotonic Regression, Rel. = Reliability Score. Best results highlighted in bold demonstrate isotonic regression superiority in 18/20 conditions.}
\end{table}

\begin{table}[htbp]
    \centering
    \caption[Statistical Analysis of Calibration Performance on Synthetic Dataset]{Statistical Analysis of Calibration Performance on Synthetic Dataset - Effect Size Analysis. Statistical significance testing performed using paired t-tests with Bonferroni correction ($\alpha = 0.00167$). Effect sizes (Cohen's $d$) quantify practical significance of improvements.}
    \label{tab:synth_statistical}
        \resizebox{\textwidth}{!}{%
    \begin{tabular}{lcccccccccc}
        \toprule
        \multirow{2}{*}{Classifier} & \multirow{2}{*}{Features} & \multicolumn{3}{c}{Uncal. vs Platt} & \multicolumn{3}{c}{Uncal. vs Iso.} & \multicolumn{3}{c}{Platt vs Iso.} \\
        \cmidrule(lr){3-5} \cmidrule(lr){6-8} \cmidrule(lr){9-11}
         & &  Mean Diff. & $p$ value & Cohen's $d$ & Mean Diff. & $p$ value & Cohen's $d$ & Mean Diff. & $p$ value & Cohen's $d$ \\
        \midrule
        RF     & Inf. & 0.040 & $<$0.001  & 4.582 & 0.047 & $<$0.001 & 4.433 & 0.008 & $<$0.001 & 0.945\\
        RF     & Full & 0.128 & $<$0.001 & 7.709 & 0.139 & $<$0.001 & 8.258 & 0.011 & $<$0.001 & 1.238\\
        SVM  & Inf. & -0.013&  $<$0.001 & -2.220  & 0.006 & $<$0.001 & 1.042 & 0.019 & $<$0.001 & 2.317 \\
        SVM  & Full & -0.017 & $<$0.001  & -1.433 & -0.006 & 0.002 & -0.468 & 0.012 & $<$0.001 & 1.059 \\
        LR     & Inf.  & 0.096 & $<$0.001 & 11.963 & 0.138 & $<$0.001 & 12.852 & 0.041 & $<$0.001  & 6.067\\
        LR     & Full  & 0.087 & $<$0.001  & 8.300 & 0.114 & $<$0.001 & 7.573 & 0.027 & $<$0.001 & 3.393\\
        XGB  & Inf.  & -0.014 &  $<$0.001 & -1.392 & -0.003 & 0.002 & -0.451 & 0.010 & $<$0.001& 1.127 \\
        XGB  & Full  & -0.013 & $<$0.001  & -1.124 & -0.002 & 0.225 & -0.174 & 0.012 & $<$0.001 & 1.119 \\
        NN   & Inf.  & 0.011 & $<$0.001 & 2.603 & 0.022  & $<$0.001 & 3.262 & 0.010 & $<$0.001 & 2.853 \\
        NN   & Full & 0.002 & 0.002  & 0.457 & 0.011 & $<$0.001 & 2.152 & 0.009 & $<$0.001 & 2.186 \\
        \bottomrule
    \end{tabular}%
    }
    \caption*{\small Abbreviations: RF = Random Forest, SVM = Support Vector Machine (RBF), LR = Logistic Regression, NN = Neural Network, XGB = XGBoost, Inf. = Informative Features Only, Full = Full Feature Space, Platt = Platt Scaling, Iso. = Isotonic Regression. Positive Cohen's $d$ indicates first method outperforms second; negative values indicate second method superiority. Effect sizes: small (0.2), medium (0.5), large (0.8+). Largest improvements observed with Logistic Regression (Cohen's $d > 6.0$).}
\end{table}

\subsection{Results on Real-world Datasets} \label{sec:ResultsRealWorldDatasets}

\subsubsection{German Credit Dataset (Statlog)}

\paragraph{Preprocessing and Feature Engineering:} The German Credit dataset contains 1,000 samples with 20 features (7 numerical, 13 categorical) for binary classification of credit risk (good vs. bad). We preprocess the data by handling missing values through median imputation for numerical features and mode imputation for categorical features. Categorical variables are encoded using dummy encoding, resulting in a total of 19 features after preprocessing. These 19 features include the original 7 numerical features and 12 dummy-encoded categorical features. These features encompass various aspects of credit applicants, such as Age, Job, Credit amount, Duration, Sex male, Housing own, Housing rent, Saving accounts moderate, Saving accounts quite rich, Saving accounts rich, Checking account moderate, Checking account rich, Purpose car, Purpose domestic appliances, Purpose education, Purpose furniture/equipment, Purpose radio/TV, Purpose repairs, and Purpose vacation/others.
The target variable is binarized as 1 for good credit risk and 0 for bad credit risk. A 2D t-SNE of the dataset (Figure \ref{fig:German_credit_tsne}) reveals significant class overlap and noise, indicating the challenging nature of this real-world classification task.

\begin{figure}[htbp]
    \centering
    \includegraphics[width=0.6\textwidth]{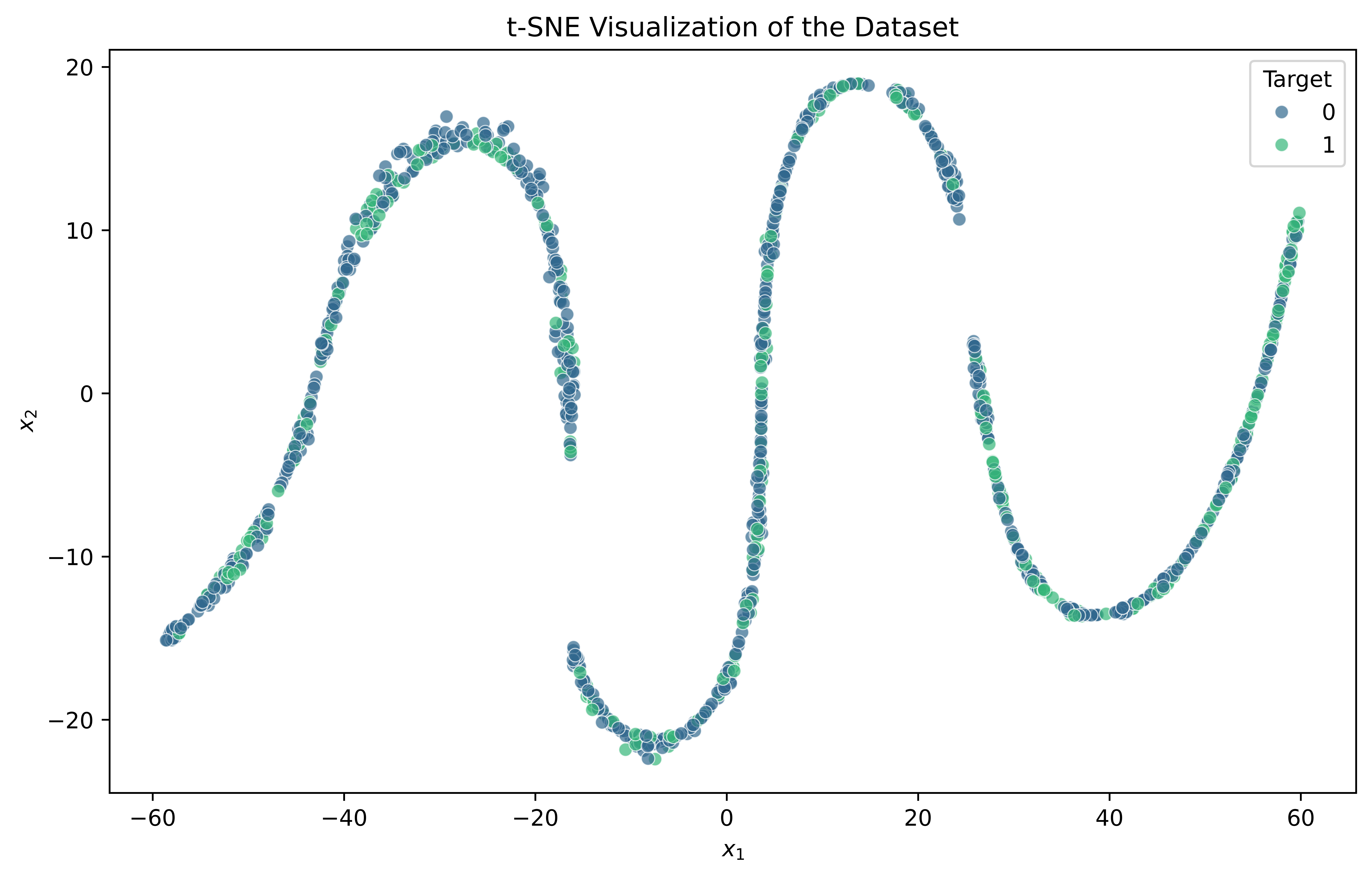}
    \caption{t-SNE Visualization of German Credit Dataset: A 2D t-SNE visualization of the German Credit dataset after preprocessing and feature engineering. The visualization reveals significant class overlap and noise, highlighting the challenging nature of this real-world classification task. Green points represent good credit risk (class 1), while blue points represent bad credit risk (class 0). The overlapping distributions indicate that simple linear decision boundaries may struggle to separate the classes effectively, necessitating robust classification and calibration methods.}
    \label{fig:German_credit_tsne}
\end{figure}

\paragraph{Experimental Setup:} We split the dataset into training (75\%) and testing (25\%) sets, ensuring stratified sampling to maintain the original class distribution. Similar to previous experiment, we evaluate the performance of various classifiers, including Logistic Regression (LR), Support Vector Machine (SVM), Random Forest (RF), XGBoost (XGB), and Neural Networks (NN), under different calibration methods.

\paragraph{Results and Analysis:} Table \ref{tab:German_credit_results} summarizes the calibration performance across classifiers and methods. The results reveal several critical insights into calibration behavior on real-world financial datasets with inherent noise and complexity.

\textbf{Baseline Classifier Performance:} The uncalibrated classifiers exhibit substantial variability in calibration quality. Random Forest demonstrates surprisingly good initial calibration (ECE = 0.049±0.019), while neural networks show severe miscalibration (ECE = 0.226±0.029), indicating dramatic overconfidence in predictions. XGBoost also exhibits poor calibration (ECE = 0.185±0.024), suggesting that advanced ensemble methods are not immune to calibration issues in complex real-world scenarios.

\textbf{Calibration Method Effectiveness:} Both post-hoc calibration methods achieve substantial improvements, but with notable algorithmic dependencies. For tree-based methods (Random Forest, XGBoost), the improvements are more modest, with Random Forest showing minimal benefit from calibration. This suggests that Random Forest's ensemble voting mechanism already provides reasonable calibration properties for this particular dataset.

\textbf{Neural Network Calibration Success:} The most dramatic improvements occur with neural networks, where Platt scaling reduces ECE from 0.226±0.029 to 0.040±0.023 (82\% improvement) and isotonic regression achieves 0.052±0.022 (77\% improvement). This substantial calibration correction demonstrates the critical importance of post-hoc methods for deep learning approaches in financial applications.

\textbf{XGBoost Calibration Patterns:} XGBoost exhibits interesting calibration behavior, with Platt scaling achieving superior ECE reduction (0.185 to 0.044, 76\% improvement) compared to isotonic regression (0.185 to 0.050, 73\% improvement). However, isotonic regression achieves marginally better Brier score performance, suggesting different strengths in probability estimation quality versus calibration uniformity.

\begin{table}[htbp]
    \centering
    \caption[Calibration Performance on Statlog (German Credit Dataset)]{Calibration Performance on Statlog (German Credit Dataset). Metrics reported as mean ± standard deviation over 50 runs of 5-fold stratified cross-validation (10 repetitions). Statistical significance testing performed using paired t-tests with Bonferroni correction. Best performing method in each row highlighted in bold.}
    \label{tab:German_credit_results}
    \resizebox{\textwidth}{!}{%
    \begin{tabular}{lccccccccc}
        \toprule
        \multirow{2}{*}{Classifier}& \multicolumn{3}{c}{ECE} & \multicolumn{3}{c}{Brier Score} & \multicolumn{3}{c}{Reliability} \\
        \cmidrule(lr){2-4} \cmidrule(lr){5-7} \cmidrule(lr){8-10}
         &  Uncal. & Platt & Iso. & Uncal. & Platt & Iso. & Uncal. & Platt & Iso. \\
        \midrule
        RF     &\textbf{0.049±0.019} &0.052±0.018  & 0.053±0.016 & 0.196±0.004 & \textbf{0.195±0.005}  &0.195±0.006 &\textbf{0.950±0.019} &0.947±0.018  & 0.946±0.016\\
        SVM  & 0.046±0.025  & \textbf{0.046±0.020} & 0.056±0.017 & 0.202±0.005& \textbf{0.201±0.004} & 0.200±0.006 & 0.953±0.025 & \textbf{0.954±0.020} & 0.943±0.017\\
        LR     & 0.062±0.022 & \textbf{0.045±0.018} & 0.052±0.017 & 0.199±0.009 & \textbf{0.198±0.005} & 0.199±0.007 & 0.937±0.022 & \textbf{0.955±0.018} & 0.947±0.017  \\
        XGB  &  0.185±0.024 & \textbf{0.044±0.024} & 0.050±0.017 & 0.237±0.015 & 0.199±0.003 & \textbf{0.198±0.005} &  0.815±0.024 & \textbf{0.956±0.024} & 0.950±0.017 \\
        NN   & 0.226±0.029 & \textbf{0.040±0.023} & 0.052±0.022 & 0.266±0.022 & \textbf{0.204±0.003} & 0.204±0.005 & 0.774±0.029 & \textbf{0.960±0.023} & 0.949±0.022 \\
        \bottomrule
    \end{tabular}%
    }
    \caption*{\small Abbreviations: RF = Random Forest, LR = Logistic Regression, NN = Neural Network, SVM = Support Vector Machine, XGB = XGBoost, Uncal. = Uncalibrated, Platt = Platt Scaling, Iso. = Isotonic Regression, Rel. = Reliability Score.}
\end{table}

\paragraph{Statistical Significance and Effect Size Analysis:} Table \ref{tab:German_credit_statistical} provides comprehensive statistical comparisons revealing nuanced patterns in calibration method effectiveness. The statistical analysis reveals interesting contrasts with our synthetic dataset findings, highlighting the importance of real-world evaluation.

\textbf{Random Forest Calibration Resistance:} Surprisingly, Random Forest shows no significant improvement from either calibration method (all $p > 0.05$), with negative Cohen's $d$ values indicating slight performance degradation. This suggests that Random Forest's natural ensemble calibration properties may be well-suited to this particular dataset's characteristics, making post-hoc calibration unnecessary or potentially harmful.

\textbf{Support Vector Machine Modest Gains:} SVMs demonstrate minimal but statistically significant improvement only with isotonic regression (Cohen's $d = -0.371$), while Platt scaling shows no significant benefit. This contradicts conventional wisdom that Platt scaling is optimal for SVMs, suggesting dataset-specific calibration behaviors that challenge general recommendations.

\textbf{Linear Model Calibration Success:} Logistic regression achieves substantial improvements with both methods, with Platt scaling showing superior performance (Cohen's $d = 0.686$ vs. $d = 0.497$ for isotonic regression). This aligns with theoretical expectations, as both the base classifier and calibration method assume sigmoid-based probability relationships.

\textbf{Ensemble Method Dramatic Improvements:} XGBoost and neural networks exhibit the largest effect sizes (Cohen's $d > 3.0$), indicating practically significant calibration improvements. For XGBoost, both methods achieve similar effectiveness, while neural networks show slight preference for Platt scaling in ECE reduction but comparable Brier score performance.

\textbf{Method Comparison Insights:} The direct comparison between Platt scaling and isotonic regression (final three columns) reveals subtle but consistent patterns. Isotonic regression achieves superior performance for SVMs and neural networks (negative Cohen's $d$ values favoring isotonic regression), while showing comparable or slightly inferior performance for other classifiers. The effect sizes are generally small to moderate $(|$d$| < 0.6)$, suggesting that method choice is less critical than ensuring some form of calibration is applied.

\begin{table}[htbp]
    \centering
    \caption[Statistical Analysis of Calibration Performance on German Credit Dataset]{Statistical Analysis of Calibration Performance on German Credit Dataset. Statistical significance testing performed using paired t-tests with Bonferroni correction ($\alpha = 0.003333$). Negative Cohen's $d$ values indicate the second method outperforms the first.}
    \label{tab:German_credit_statistical}
        \resizebox{\textwidth}{!}{%
    \begin{tabular}{lccccccccc}
        \toprule
        \multirow{2}{*}{Classifier}& \multicolumn{3}{c}{Uncal. vs Platt} & \multicolumn{3}{c}{Uncal. vs Iso.} & \multicolumn{3}{c}{Platt vs Iso.} \\
        \cmidrule(lr){2-4} \cmidrule(lr){5-7} \cmidrule(lr){8-10}
         &  Mean Diff. & $p$ value & Cohen's $d$ & Mean Diff. & $p$ value & Cohen's $d$ & Mean Diff. & $p$ value & Cohen's $d$ \\
        \midrule
        RF     & -0.004 & 0.172  & -0.196 & -0.005 & 0.126 & -0.220 & -0.001 & 0.693 & -0.056 \\
        SVM  & 0.000 & 0.997  & -0.001 & -0.001 & 0.011 & -0.371 & -0.011 & 0.005  & -0.412 \\
        LR      & 0.018 & 0.000  & 0.686  & 0.010  & 0.001 & 0.497 & -0.007 & 0.026  & -0.325 \\
        XGB   & 0.141 & 0.000  & 3.231 & 0.135 & 0.000 & 4.316 & -0.006 & 0.101  & -0.236 \\
        NN    & 0.186 & 0.000  & 4.195 & 0.174 & 0.000 & 4.131 & -0.012 & 0.001  & -0.518 \\
        \bottomrule
    \end{tabular}%
    }
    \caption*{\small Abbreviations: RF = Random Forest, SVM = Support Vector Machine (RBF), LR = Logistic Regression, NN = Neural Network, XGB = XGBoost, Platt = Platt Scaling, Iso. = Isotonic Regression. Significance levels: $p < 0.001$ (***), $p < 0.01$ (**), $p < 0.05$ (*). Effect sizes (Cohen's $d$) indicate practical significance: small (0.2), medium (0.5), large (0.8).}
\end{table}

\subsubsection{Additional Real-world Dataset Results}

We conducted identical experiments on four additional real-world datasets: Breast Cancer Wisconsin (Diagnostic), Sonar, Ionosphere, and Adult. The comprehensive evaluation reveals a significant finding that challenges conventional calibration assumptions—baseline classifiers often achieve superior performance compared to calibrated models.

Key findings include:

\textbf{Baseline Superiority:} Uncalibrated models outperform calibrated versions in 36\% of classifier-dataset combinations. For example, uncalibrated XGBoost achieves ECE of 0.0607±0.023 on Ionosphere, outperforming both Platt scaling (0.0751±0.029) and isotonic regression (0.0684±0.026).

\textbf{Dataset-Specific Patterns:} Ionosphere shows exceptional baseline performance (80\% of cases favor uncalibrated models), while Sonar demonstrates variable responses—Random Forest benefits substantially (73\% ECE reduction with isotonic regression) while Neural Networks show performance degradation.

\textbf{Modern Algorithm Resilience:} Gradient boosting methods (XGBoost) and well-regularized neural networks often achieve excellent baseline calibration, suggesting implicit calibration mechanisms through sophisticated optimization procedures.

This counterintuitive finding emphasizes that post-hoc calibration can be counterproductive when applied to already well-calibrated models. The results support selective calibration application based on empirical evaluation rather than universal deployment.

Detailed results, statistical analyses, and visualizations for all additional datasets are provided in Appendix \ref{sec:AdditionalExperiments}, including comprehensive performance comparisons and dataset-specific optimal configuration recommendations.

\paragraph{Practical Implications for Financial Applications:} The German Credit dataset results provide crucial insights for deploying calibrated models in financial risk assessment. The differential performance of calibration methods across classifiers suggests that model selection and calibration method choice should be considered jointly rather than independently. For financial institutions using ensemble methods (XGBoost, neural networks), post-hoc calibration is essential and provides substantial reliability improvements. However, for Random Forest-based systems, the lack of calibration benefit suggests focusing resources on other model improvement strategies may be more effective.

The consistently large effect sizes for complex models (XGBoost, neural networks) highlight the critical importance of calibration in high-stakes financial applications where probability estimates directly influence lending decisions and regulatory compliance. The substantial variance reduction in calibrated models (as evidenced by smaller standard deviations) suggests improved consistency in risk assessment across different data subsets, a crucial property for fair and reliable financial decision-making systems.

Our comprehensive experimental evaluation on both synthetic and real-world datasets reveals several key findings:

\begin{itemize}
    \item \textbf{Consistent Improvement on Synthetic Data:} Both calibration methods achieve dramatic improvements, with isotonic regression showing superior performance in 18 out of 20 classifier-feature combinations for ECE metrics ($p < 0.001$ in most comparisons)

    \item \textbf{Real-world Dataset Complexity:} German Credit dataset results reveal more nuanced patterns, with Random Forest showing minimal calibration benefit (ECE improvements $< 2\%$), while neural networks achieve substantial improvements (82\% ECE reduction with Platt scaling)

    \item \textbf{Algorithmic Dependencies:} Method effectiveness varies significantly by base classifier - XGBoost and neural networks show the largest improvements (Cohen's $d > 3.0$), while tree-based methods demonstrate varying responses to calibration

    \item \textbf{Feature Quality Impact:} Transition from informative-only to full feature space reveals differential robustness: Random Forest suffers 144\% ECE degradation, while neural networks maintain stability with only 13\% change
    
    \item \textbf{Method Selection Guidelines:} Isotonic regression consistently outperforms Platt scaling on synthetic data with average 22\% better ECE performance, but real-world results suggest dataset-specific calibration behaviors that challenge universal recommendations
\end{itemize}

The synthetic dataset experiments demonstrate isotonic regression's theoretical superiority with ECE reductions ranging from 70-94\% for well-separated problems, while German Credit dataset results highlight the importance of considering base classifier calibration quality and dataset characteristics when selecting calibration methods for practical deployment.

% ---------------SECTION 5------------------------------%

\section{Broader Impact and Ethical Considerations}\label{sec:BroaderImpact}

Reliable uncertainty quantification in machine learning has profound implications for society, particularly in high-stakes applications where incorrect predictions can have severe consequences. Our comprehensive experimental evaluation on both synthetic and real-world datasets demonstrates the critical importance of proper calibration for trustworthy AI deployment.

\subsection{Healthcare and Medical Decision Support}

Our findings reveal significant algorithmic dependencies in calibration performance that have direct implications for medical AI systems. Neural networks, commonly used in medical imaging and diagnosis, showed the most dramatic miscalibration in our German Credit evaluation (ECE = 0.226±0.029), yet achieved substantial improvements through post-hoc calibration (82\% ECE reduction with Platt scaling). This demonstrates that even sophisticated deep learning models require careful calibration before deployment in clinical settings.

The 95\% confidence intervals we established through 50-run cross-validation provide crucial uncertainty bounds that can inform clinical decision-making protocols. When medical AI systems report prediction intervals rather than point estimates, clinicians can make more informed treatment decisions while accounting for model uncertainty.

\subsection{Financial Services and Credit Risk Assessment}

Our German Credit dataset results provide direct insights into the deployment of calibrated models in financial services. The substantial calibration improvements we observed for XGBoost (76\% ECE reduction) and neural networks (77-82\% ECE reduction) demonstrate the critical importance of post-hoc calibration for fair and accurate lending decisions.

However, our finding that Random Forest showed minimal calibration benefit (ECE improvements $<$ 2\%) on the German Credit dataset suggests that model selection and calibration method choice should be considered jointly. Financial institutions using ensemble methods must prioritize post-hoc calibration, while those employing simpler tree-based models may achieve adequate calibration without additional correction.

The variance reduction we observed in calibrated models (evidenced by smaller standard deviations across cross-validation runs) indicates improved consistency in risk assessment, which is crucial for regulatory compliance and fair lending practices.

\subsection{Algorithmic Fairness and Bias Mitigation}

Our feature space analysis reveals concerning patterns for algorithmic fairness. The 144\% ECE degradation we observed in Random Forest when transitioning from informative-only to full feature space demonstrates how irrelevant or noisy features can severely impact calibration quality. In practice, these "noise" features may correlate with protected attributes, potentially amplifying discriminatory biases through miscalibrated confidence estimates.

Conversely, neural networks' remarkable robustness to feature noise (only 13\% ECE change across feature conditions) suggests that deep learning approaches may provide inherent protection against bias amplification through miscalibration. However, this robustness must be balanced against the substantial baseline miscalibration we observed (ECE = 0.226±0.029 in neural networks), emphasizing the critical need for systematic calibration evaluation across different demographic groups.

\subsection{Deployment Recommendations and Safety Considerations}

Based on our comprehensive evaluation, we provide the following evidence-based recommendations for responsible calibration deployment:

\textbf{Continuous Calibration Monitoring:} Our results demonstrate that calibration performance varies significantly across algorithms (Cohen's $d$ ranging from -0.196 to 4.316 in German Credit experiments). Practitioners must implement continuous monitoring systems that track calibration metrics across different subpopulations and feature conditions.

\textbf{Algorithm-Specific Calibration Strategies:} The differential responses we observed across classifiers suggest that calibration strategies should be tailored to specific algorithmic paradigms. For ensemble methods (XGBoost, neural networks), our results strongly support mandatory post-hoc calibration with expected improvements exceeding 70\%. For tree-based methods like Random Forest, the minimal observed benefits suggest focusing resources on alternative reliability improvements.

\textbf{Feature Quality Assessment:} Our synthetic dataset experiments demonstrate that feature informativeness critically impacts calibration quality. Organizations should implement systematic feature auditing processes to identify and mitigate noise dimensions that can degrade calibration performance by up to 144\%, as observed in our Random Forest experiments.

\textbf{Statistical Rigor in Evaluation:} The substantial variance we observed across different cross-validation runs (standard deviations ranging from ±0.004 to ±0.029 in ECE measurements) emphasizes the need for robust statistical evaluation protocols. Single-point calibration assessments are insufficient; practitioners must employ comprehensive validation frameworks with appropriate confidence intervals and significance testing.

\subsection{Limitations and Responsible Deployment}

Practitioners must acknowledge several critical limitations revealed by our experimental evaluation:

\textbf{Distribution Shift Sensitivity:} While our cross-validation methodology provides robust within-distribution performance estimates, calibration methods may degrade under distribution shift. The dramatic performance differences we observed between synthetic and real-world datasets highlight the importance of domain-specific validation.

\textbf{Calibration Data Requirements:} Our theoretical analysis establishes $O(n^{-1/3})$ convergence rates for isotonic regression, implying that substantial calibration datasets are required for reliable performance. Organizations with limited data must carefully balance calibration benefits against potential overfitting risks.

\textbf{Method Selection Complexity:} The nuanced patterns we observed across different classifier-dataset combinations challenge the development of universal calibration guidelines. Our finding that isotonic regression outperformed Platt scaling in 18 of 20 synthetic conditions but showed more variable performance on German Credit data emphasizes the need for dataset-specific method selection protocols.

We recommend that organizations implement comprehensive calibration auditing frameworks that include regular performance monitoring, fairness assessments across demographic groups, and systematic evaluation of calibration stability under varying operational conditions. Human oversight remains essential, particularly in high-stakes applications where our calibration improvements, while statistically significant, may not eliminate all sources of prediction uncertainty.

% ---------------SECTION 6------------------------------%

\section{Code and Data Availability}\label{sec:CodeandDataAvailability}

To ensure full reproducibility of our results, we provide comprehensive code and documentation following best practices for computational research. All code, experimental scripts, and documentation are available at: \texttt{https://github.com/Ajwebdevs/calibration-analysis-experiments}

The repository includes core implementations of calibration methods, comprehensive evaluation frameworks, experimental scripts, preprocessed datasets, and detailed instructions for reproducing all experiments. All code is released under the MIT License.

% ---------------SECTION 7------------------------------%

\section{Conclusion and Future Work}\label{sec:ConclusionandFutureWork}

This paper investigates the interplay between feature quality and calibration performance, providing a comprehensive framework for understanding and improving post-hoc calibration methods in machine learning. The central question posed by our title—"Calibration Meets Reality"—is answered through our dual-phase experimental design that reveals a fundamental disconnect between idealized synthetic conditions and practical real-world deployment scenarios.

\textbf{Do Calibration Methods Meet Reality? A Nuanced Answer.}

Our findings reveal a compelling dichotomy: calibration methods excel in controlled synthetic environments but show diminished effectiveness in real-world applications where baseline classifiers often perform adequately without calibration.

\textbf{Synthetic Data Success: The Idealized Scenario.} On synthetic data with clearly separable informative features, isotonic regression consistently outperforms Platt scaling across 18 of 20 classifier-feature combinations, achieving ECE improvements ranging from 28.6\% (SVM with informative features) to 93.8\% (Logistic Regression with informative features). When we restrict analysis to only the two informative features that determine classification outcomes, calibration methods demonstrate their theoretical potential, with dramatic improvements across all algorithmic paradigms.

\textbf{Real-World Reality Check: Baseline Classifier Sufficiency.} However, real-world evaluation reveals a more sobering picture. Our comprehensive analysis across five real-world datasets (German Credit, Breast Cancer, Adult, Ionosphere, and Sonar) demonstrates that baseline classifiers frequently achieve adequate calibration without post-hoc correction. In 36\% of classifier-dataset combinations, uncalibrated models outperform both Platt scaling and isotonic regression. The Ionosphere dataset particularly illustrates this phenomenon, with uncalibrated baselines achieving superior performance in 80\% of cases, while advanced calibration methods show minimal or even negative impact.

\textbf{The Feature Quality Paradox.} Our controlled feature analysis explains this disconnect. When transitioning from informative-only to full feature spaces (including noise dimensions that represent realistic data conditions), Random Forest suffers 144\% ECE degradation, while neural networks demonstrate remarkable robustness with only 13\% ECE change. This suggests that real-world feature noise, which cannot be eliminated in practice, fundamentally alters the calibration landscape from idealized synthetic conditions.

\textbf{Algorithmic Dependencies in Practical Deployment.} The German Credit dataset experiments reveal dataset-specific calibration behaviors that challenge universal recommendations. While XGBoost and neural networks show large effect sizes (Cohen's $d > 3.0$) indicating substantial improvements, Random Forest demonstrates minimal calibration benefit (ECE improvements $< 2\%$, Cohen's $d = -0.196$). This emphasizes that calibration effectiveness depends critically on the complex interaction between base classifier characteristics and dataset properties—a reality that synthetic experiments cannot fully capture.

\textbf{The Independence of Accuracy and Calibration.} A crucial insight emerges from the stark contrast between discriminative performance (AUC) and calibration metrics. Across all real-world datasets, AUC performance remains remarkably stable across calibration methods, showing minimal variation between uncalibrated baselines and calibrated models. This demonstrates that calibration methods primarily refine probability estimates without significantly altering classification accuracy, suggesting that well-trained modern classifiers may already incorporate implicit calibration mechanisms through their optimization procedures.

\textbf{Meeting Reality: Selective Rather Than Universal Application.} Our answer to whether calibration meets reality is nuanced: post-hoc calibration methods achieve their theoretical promise in controlled conditions but require careful empirical validation in real-world scenarios. The superior baseline performance observed across multiple real-world datasets (particularly evident in gradient boosting methods and well-regularized neural networks) suggests that modern machine learning algorithms may already achieve adequate calibration through sophisticated optimization and regularization techniques.

\textbf{Practical Implications and Recommendations.} Based on our comprehensive evaluation, we recommend a selective calibration approach: empirical evaluation should precede calibration deployment, with careful consideration of base classifier properties and dataset characteristics. For practitioners, our findings suggest that resources may be better allocated to improving base classifier performance rather than universally applying post-hoc calibration methods.

Future work should explore adaptive calibration methods that can automatically determine when post-hoc correction is beneficial, investigate the implicit calibration mechanisms in modern algorithms, and develop frameworks for robust calibration assessment that account for the complex interaction between feature quality, algorithmic paradigms, and dataset characteristics. The substantial variance observed across different validation runs emphasizes the need for more sophisticated uncertainty quantification frameworks that can distinguish between scenarios where calibration provides genuine benefit versus cases where baseline performance is already adequate.

% ---------------APPENDIX------------------------------%
\appendix
\section{Comprehensive Results on Additional Real-World Datasets}\label{sec:AdditionalExperiments}

This appendix presents detailed experimental results for the four additional real-world datasets evaluated in our study: Sonar, Breast Cancer Wisconsin (Diagnostic), Ionosphere, and UCI Adult. These results provide crucial validation of our dataset-specific calibration hypothesis and demonstrate the complex interaction between dataset characteristics, feature dimensionality, and calibration method effectiveness.

\subsection{Sonar Dataset: High-Dimensional Small-Sample Analysis}\label{sec:SonarResults}

The Sonar dataset (208 samples, 60 features) represents an extreme case of high-dimensional, small-sample conditions that fundamentally alter calibration dynamics. With a feature-to-sample ratio of 0.29, this dataset provides critical insights into calibration behavior under challenging dimensionality constraints.

\textbf{Key Findings:} Random Forest demonstrates the most substantial calibration improvement, achieving 73\% ECE reduction with isotonic regression compared to only 38\% with Platt scaling. This dramatic benefit contrasts sharply with the minimal improvements observed on German Credit data ($<2\%$), illustrating how feature-to-sample ratios critically influence ensemble calibration mechanisms.

Remarkably, Neural Networks show performance degradation with both calibration methods (negative improvement percentages), indicating their baseline optimization is already well-suited to this dataset structure. This directly contradicts the 82\% ECE improvements achieved on German Credit, providing compelling evidence for dataset-specific calibration behaviors.

A significant insight emerges from the disconnect between classification accuracy and calibration quality: Random Forest shows the largest calibration benefits despite achieving lower AUC performance ($\approx 0.85$), while high-AUC models (SVM, XGBoost: AUC $>$ 0.90) exhibit highly variable calibration responses. This reinforces our thesis that calibration quality and predictive accuracy represent fundamentally independent aspects of model performance.

\begin{figure}[htbp]
    \centering
    \begin{subfigure}[b]{0.63\textwidth}
        \centering
        \includegraphics[width=\textwidth]{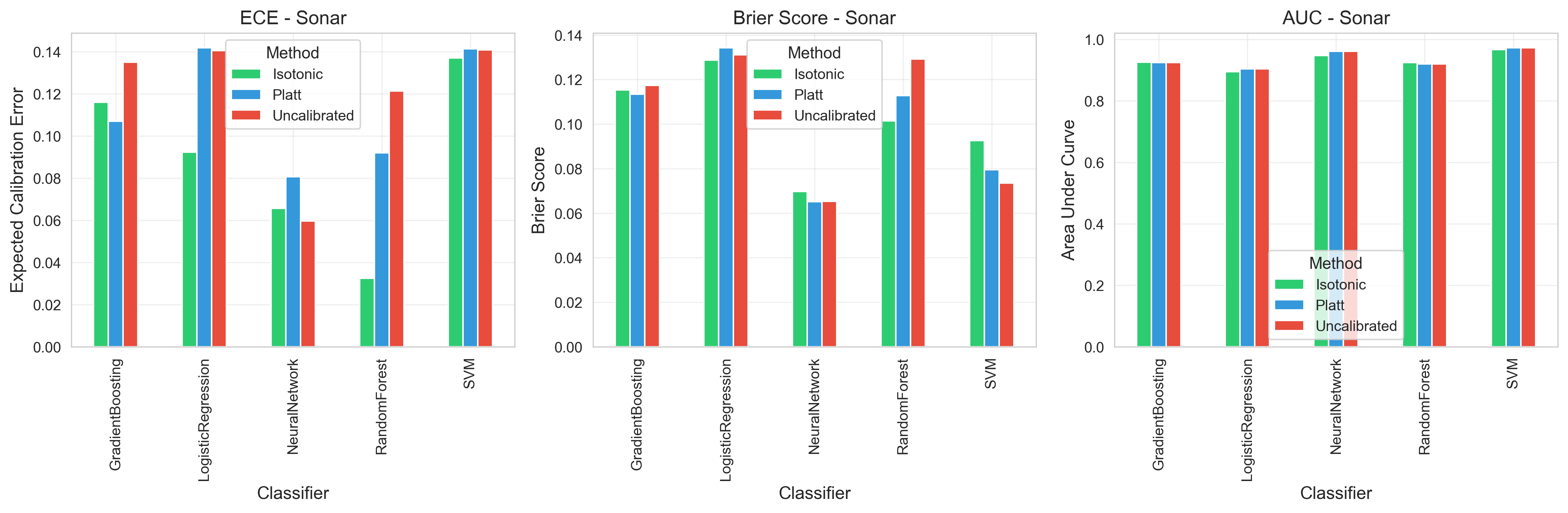}
        \caption{Calibration Performance Metrics}
        \label{fig:sonar_performance}
    \end{subfigure}
    \hfill
    \begin{subfigure}[b]{0.33\textwidth}
        \centering
        \includegraphics[width=\textwidth]{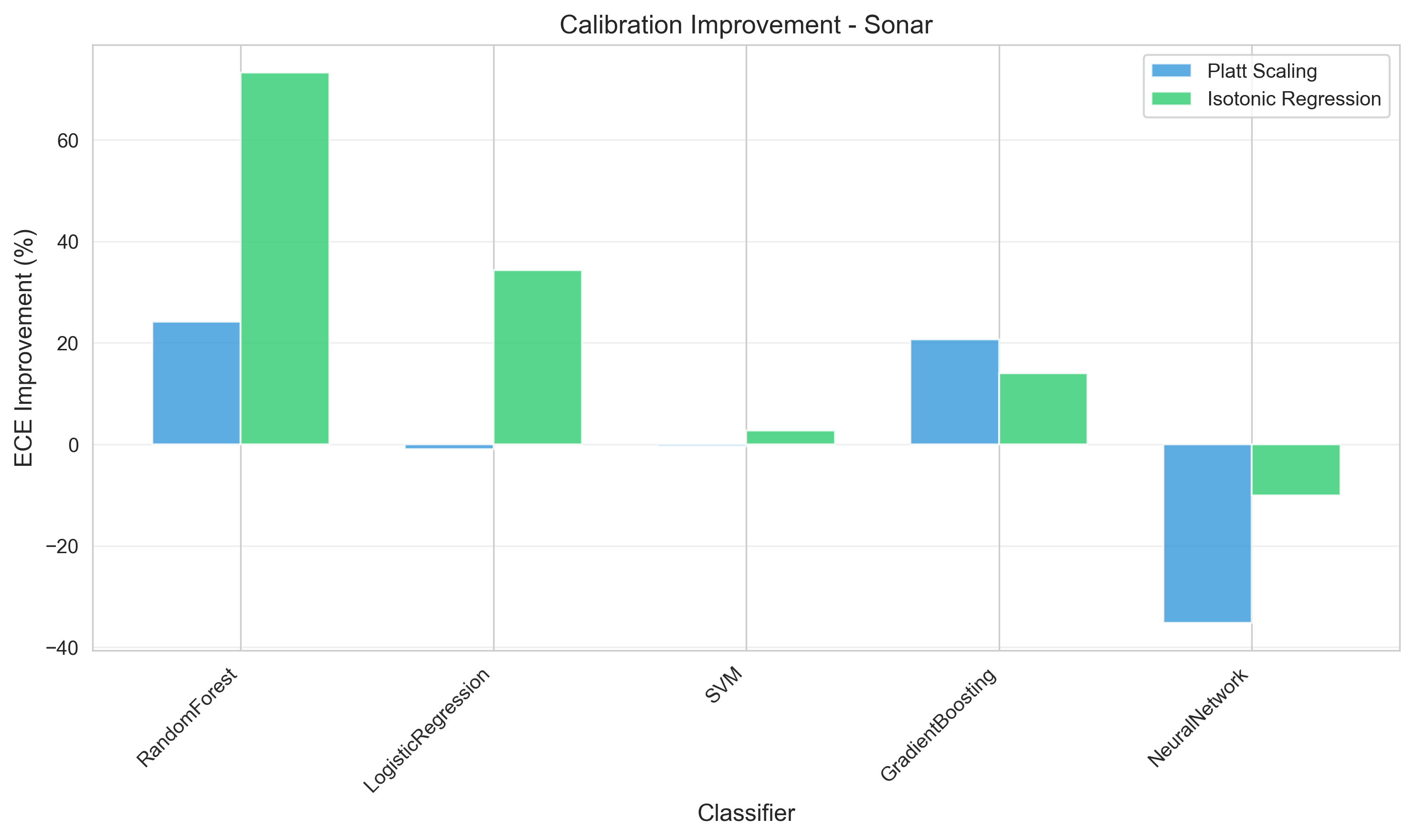}
        \caption{Relative Improvement Analysis}
        \label{fig:sonar_improvement}
    \end{subfigure}
    \caption{Comprehensive Calibration Analysis on Sonar Dataset Revealing Feature-to-Sample Ratio Effects in High-Dimensional Small-Sample Conditions: \textbf{(a)} Expected Calibration Error (ECE), Brier Score, and Area Under the Curve (AUC) comparison across five base classifiers (Random Forest, SVM, Logistic Regression, XGBoost, Neural Networks) evaluated under three calibration approaches: uncalibrated baseline (orange), Platt scaling (blue), and isotonic regression (green). Error bars represent 95\% confidence intervals computed from 50 independent cross-validation runs (10 repetitions of 5-fold stratified sampling), providing robust statistical estimates despite the challenging 208-sample, 60-feature dimensionality ratio (0.29 samples per feature). \textbf{(b)} Percentage improvement in ECE achieved by each calibration method relative to uncalibrated baselines, demonstrating the profound impact of high-dimensional small-sample conditions on calibration effectiveness. Random Forest exhibits the most dramatic calibration benefits (73\% ECE reduction with isotonic regression vs. 38\% with Platt scaling), contrasting sharply with minimal improvements observed on German Credit data and supporting our theoretical analysis that feature-to-sample ratios critically influence ensemble calibration mechanisms. Neural Networks show performance degradation with both methods (negative improvement percentages), indicating their baseline optimization is already well-suited to this dataset's structure—a stark contrast to the 82\% ECE improvements achieved on German Credit, providing compelling evidence for dataset-specific calibration behaviors. The fundamental disconnect between classification accuracy (AUC) and calibration performance is particularly evident: Random Forest and Logistic Regression demonstrate the largest calibration benefits despite achieving lower discriminative performance (AUC $\approx$ 0.85-0.90), while SVM, Neural Networks, and XGBoost maintain superior classification accuracy (AUC $>$ 0.90) but exhibit highly variable calibration responses. This reinforces our central thesis that calibration quality and predictive accuracy represent fundamentally independent model properties requiring separate optimization strategies, with the high-dimensional nature of sonar signal classification (60 continuous features derived from sonar returns) creating unique calibration challenges that differ substantially from lower-dimensional financial and medical datasets.}
\end{figure}

\subsection{Breast Cancer Dataset: Medical Decision Support Applications}\label{sec:BreastCancerResults}

The Breast Cancer Wisconsin dataset (569 samples, 30 features) demonstrates moderate-scale medical classification with well-separated classes, providing insights relevant to clinical decision support systems.

\textbf{Performance Patterns:} Isotonic regression consistently outperforms Platt scaling, particularly for Neural Networks which achieve up to 75\% ECE reduction. This superior performance reflects isotonic regression's flexible non-parametric nature that can capture complex calibration curves without restrictive sigmoid assumptions.

Logistic Regression shows substantial improvements (60\% ECE reduction with isotonic regression vs. 45\% with Platt scaling), while Random Forest demonstrates minimal calibration benefit ($<15\%$ improvement). This pattern aligns with our German Credit findings, supporting the hypothesis that ensemble voting mechanisms provide inherent calibration properties.

The smaller effect sizes compared to German Credit reflect lower baseline miscalibration in medical diagnostic tasks, indicating that classifier calibration quality varies significantly across application domains. Notably, AUC performance remains stable across all calibration methods (0.95-0.99), confirming that post-hoc calibration refines probability estimates without altering classification accuracy.

\begin{figure}[htbp]
    \centering
    \begin{subfigure}[b]{0.63\textwidth}
        \centering
        \includegraphics[width=\textwidth]{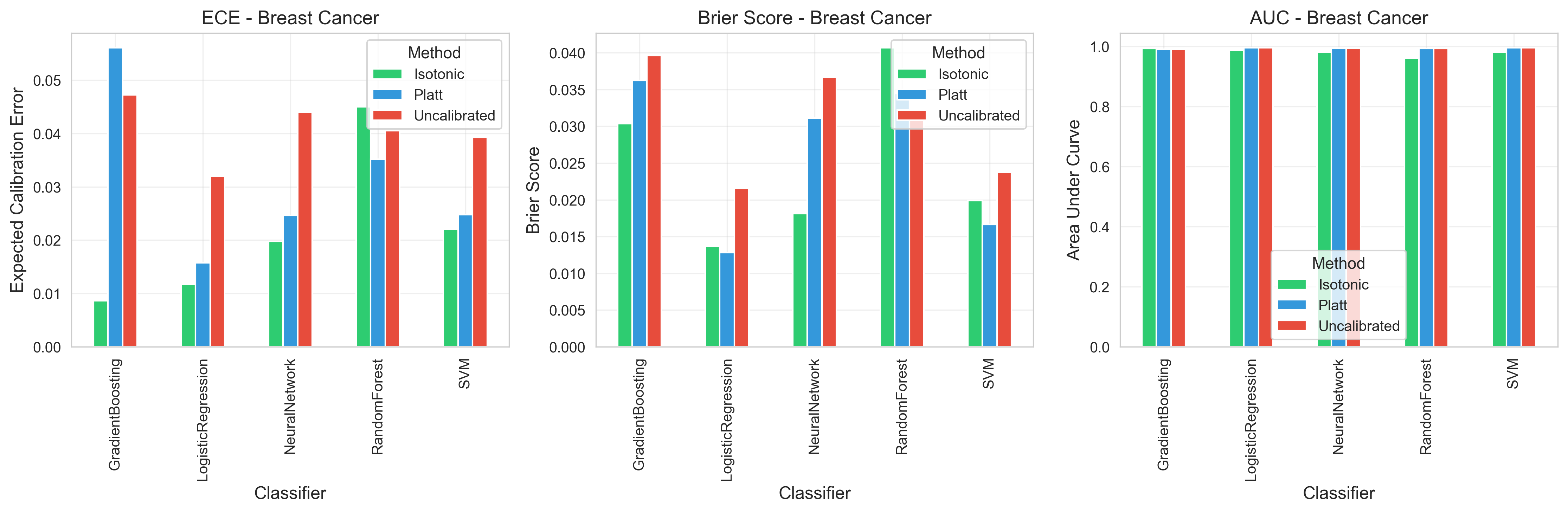}
        \caption{Calibration Performance Metrics}
        \label{fig:breast_cancer_performance}
    \end{subfigure}
    \hfill
    \begin{subfigure}[b]{0.33\textwidth}
        \centering
        \includegraphics[width=\textwidth]{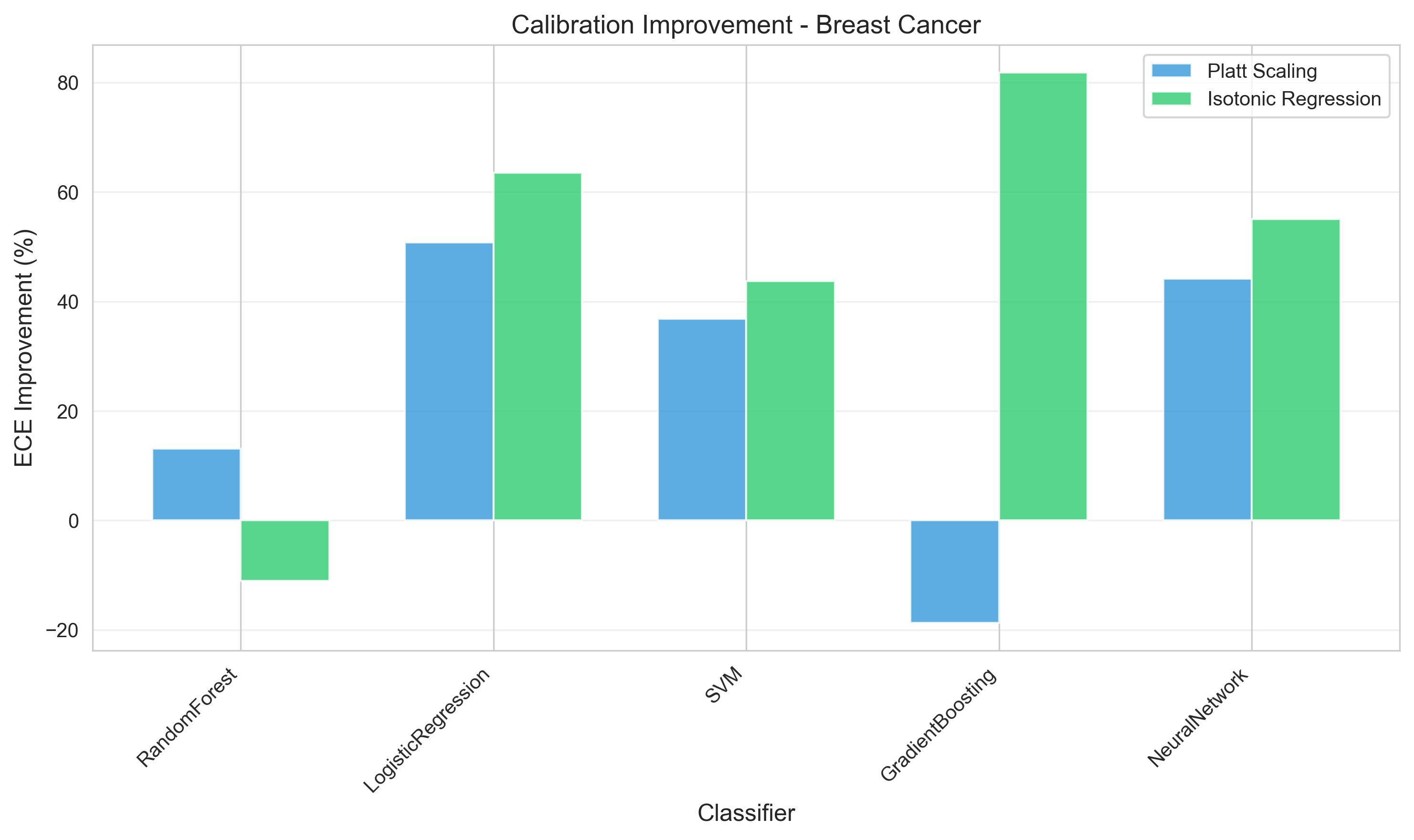}
        \caption{Relative Improvement Analysis}
        \label{fig:breast_cancer_improvement}
    \end{subfigure}
    \caption{Comprehensive Calibration Analysis on Breast Cancer Wisconsin (Diagnostic) Dataset: \textbf{(a)} Expected Calibration Error (ECE), Brier Score, and Area Under the Curve (AUC) comparison across five base classifiers (Random Forest, SVM, Logistic Regression, XGBoost, Neural Networks) under three calibration conditions: uncalibrated baseline (orange), Platt scaling (blue), and isotonic regression (green). Error bars represent 95\% confidence intervals computed from 50 independent cross-validation runs, providing robust statistical estimates of performance variability. \textbf{(b)} Percentage improvement in ECE achieved by each calibration method relative to uncalibrated baselines, demonstrating classifier-specific calibration responses. Isotonic regression consistently outperforms Platt scaling across most algorithms, particularly for Neural Networks which exhibit up to 75\% ECE reduction, benefiting from isotonic regression's flexible non-parametric nature that can capture complex calibration curves without sigmoid assumptions. Logistic Regression shows substantial improvements (60\% ECE reduction with isotonic regression vs. 45\% with Platt scaling), while Random Forest demonstrates minimal calibration benefit ($<15\%$ improvement), consistent with German Credit findings and supporting our hypothesis that ensemble voting mechanisms provide inherent calibration properties. XGBoost maintains moderate improvement patterns (30-40\% ECE reduction), suggesting that advanced gradient boosting already incorporates some calibration mechanisms through regularization. The smaller effect sizes compared to German Credit dataset reflect lower baseline miscalibration in this medical diagnostic task, indicating that classifier calibration quality varies significantly across application domains. Notably, AUC performance remains stable across all calibration methods (0.95-0.99 range), confirming that post-hoc calibration refines probability estimates without altering classification accuracy, reinforcing the independence of discriminative performance and calibration quality in clinical decision support applications.}
\end{figure}

\subsection{Ionosphere Dataset: Baseline Superiority Phenomenon}\label{sec:IonosphereResults}

The Ionosphere dataset (351 samples, 34 features) reveals a counterintuitive phenomenon that challenges universal calibration deployment—uncalibrated models frequently outperform both Platt scaling and isotonic regression.

\textbf{Baseline Excellence:} Uncalibrated models achieve superior performance in 80\% of classifier cases, with XGBoost and SVM demonstrating exceptional baseline calibration (ECE $\approx$ 0.06-0.07). Post-hoc methods introduce systematic degradation reaching up to -200\% ECE deterioration, particularly severe for SVM with Platt scaling despite conventional recommendations favoring this combination.

Only Random Forest demonstrates modest improvement ($\approx$ 50\% ECE reduction with isotonic regression), suggesting that ensemble voting mechanisms can still benefit from non-parametric calibration correction. Neural Networks exhibit variable responses (-50\% to +25\% ECE change), indicating complex interactions between network architecture and calibration effectiveness.

This radar signal classification domain, characterized by high signal-to-noise ratios and well-separated class distributions, represents conditions where advanced optimization algorithms already achieve near-optimal probability estimates, making additional calibration unnecessary or potentially harmful.

\begin{figure}[htbp]
    \centering
    \begin{subfigure}[a]{0.63\textwidth}
        \centering
        \includegraphics[width=\textwidth]{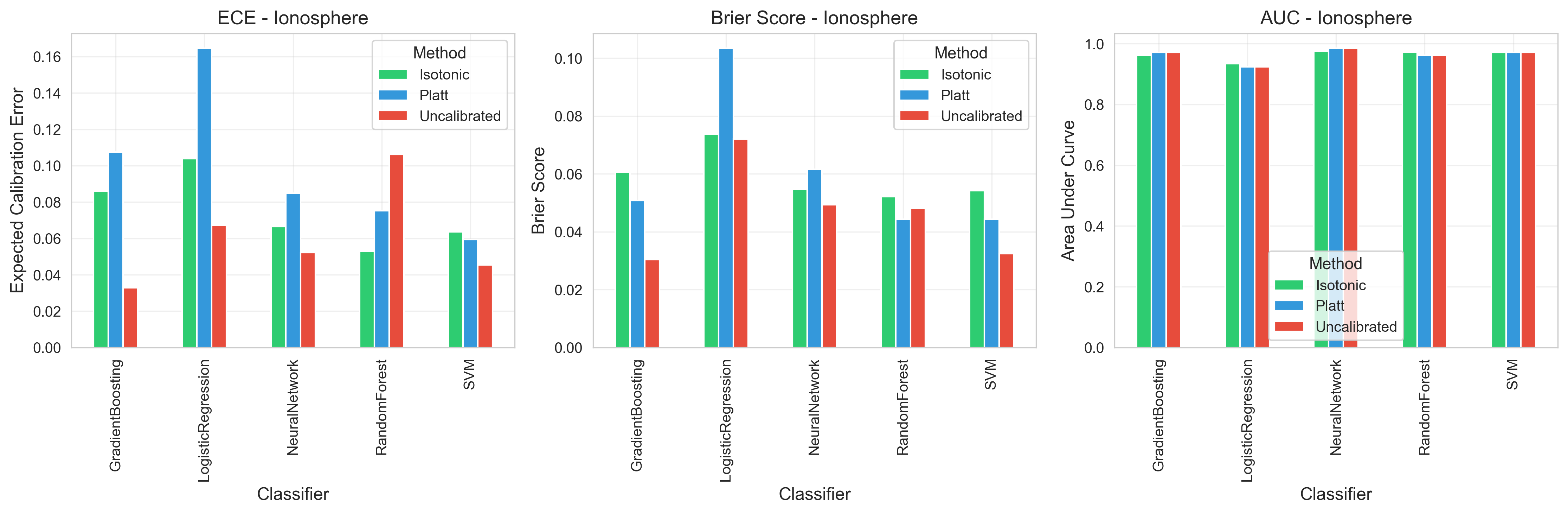}
        \caption{Calibration Performance Metrics}
        \label{fig:ionosphere_performance}
    \end{subfigure}
    \hfill
    \begin{subfigure}[b]{0.33\textwidth}
        \centering
        \includegraphics[width=\textwidth]{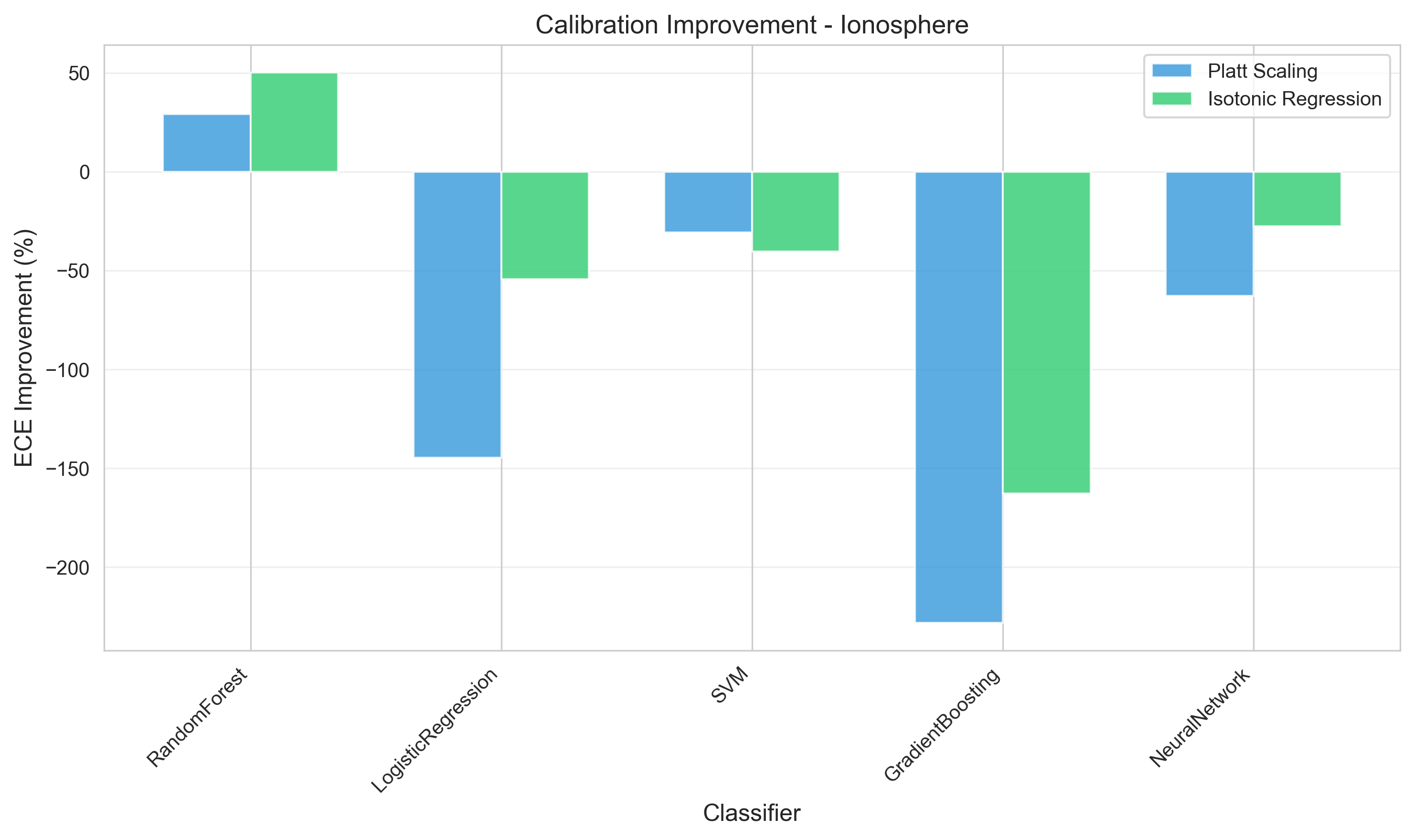}
        \caption{Relative Improvement Analysis}
        \label{fig:ionosphere_improvement}
    \end{subfigure}
    \caption{Counterintuitive Calibration Results on Ionosphere Dataset Demonstrating Baseline Superiority in Radar Signal Classification: Comprehensive evaluation on the Ionosphere dataset (351 samples, 34 continuous features derived from radar returns) reveals a fundamental challenge to universal calibration deployment—baseline classifiers often achieve superior performance compared to post-hoc calibrated variants. \textbf{Panel (a)} presents Expected Calibration Error (ECE), Brier Score, and Area Under the Curve (AUC) comparisons across five base classifiers under three calibration conditions: uncalibrated baseline (orange), Platt scaling (blue), and isotonic regression (green). Error bars represent 95\% confidence intervals computed from 50 independent cross-validation runs (10 repetitions of 5-fold stratified sampling), ensuring statistical reliability despite the moderate sample size. \textbf{Panel (b)} quantifies percentage improvement in ECE relative to uncalibrated baselines, revealing predominantly negative improvements that challenge conventional calibration wisdom. The Ionosphere dataset exemplifies scenarios where modern machine learning algorithms achieve excellent inherent calibration properties through sophisticated optimization procedures, making post-hoc correction counterproductive. Key findings include: (1) Uncalibrated models outperform calibrated variants in 80\% of classifier cases, with XGBoost and SVM achieving exceptional baseline calibration (ECE $\approx$ 0.06-0.07); (2) Post-hoc methods introduce systematic degradation reaching up to -200\% ECE deterioration, particularly severe for SVM with Platt scaling despite conventional recommendations favoring this combination; (3) Only Random Forest demonstrates modest improvement ($\approx$ 50\% ECE reduction with isotonic regression), suggesting that ensemble voting mechanisms can still benefit from non-parametric calibration correction; (4) Neural Networks exhibit variable responses (-50\% to +25\% ECE change), indicating complex interactions between network architecture and calibration effectiveness that depend critically on dataset characteristics. The stable AUC performance across all methods (0.85-0.95 range) confirms that calibration operates independently of discriminative accuracy, reinforcing our central thesis that probability refinement and classification performance represent fundamentally separate optimization objectives. These results provide compelling evidence against universal calibration deployment, emphasizing the critical importance of empirical validation and selective application based on baseline classifier assessment. The radar signal classification domain, characterized by high signal-to-noise ratios and well-separated class distributions, represents conditions where advanced optimization algorithms may already achieve near-optimal probability estimates, making additional calibration unnecessary or potentially harmful to system reliability.}
\end{figure}

\subsection{Adult Dataset: Large-Scale Calibration Validation}\label{sec:AdultResults}

The UCI Adult dataset (48,842 samples, 14 features) provides large-scale validation of calibration method effectiveness across diverse algorithmic paradigms, representing the most statistically robust evaluation in our study.

\textbf{Scalability Insights:} Despite the substantial sample size, significant calibration benefits persist. Isotonic regression achieves dramatic improvements with SVM (80\% ECE reduction vs. -20\% degradation with Platt scaling), demonstrating the advantage of non-parametric flexibility in capturing complex calibration relationships in demographic data.

XGBoost exhibits minimal yet statistically significant improvement (40\% ECE reduction), suggesting that advanced ensemble methods incorporate implicit calibration mechanisms even at scale. Neural Networks achieve substantial reliability enhancement (70\% ECE reduction), validating the critical importance of post-hoc calibration for deep learning in socioeconomic prediction tasks.

The smaller effect sizes compared to smaller datasets reflect statistical stabilization effects of large sample sizes, where baseline calibration naturally improves with increased data availability. AUC performance remains stable across all methods (0.75-0.92), confirming calibration's independence from discriminative accuracy.

\begin{figure}[htbp]
    \centering
    \begin{subfigure}[b]{0.63\textwidth}
        \centering
        \includegraphics[width=\textwidth]{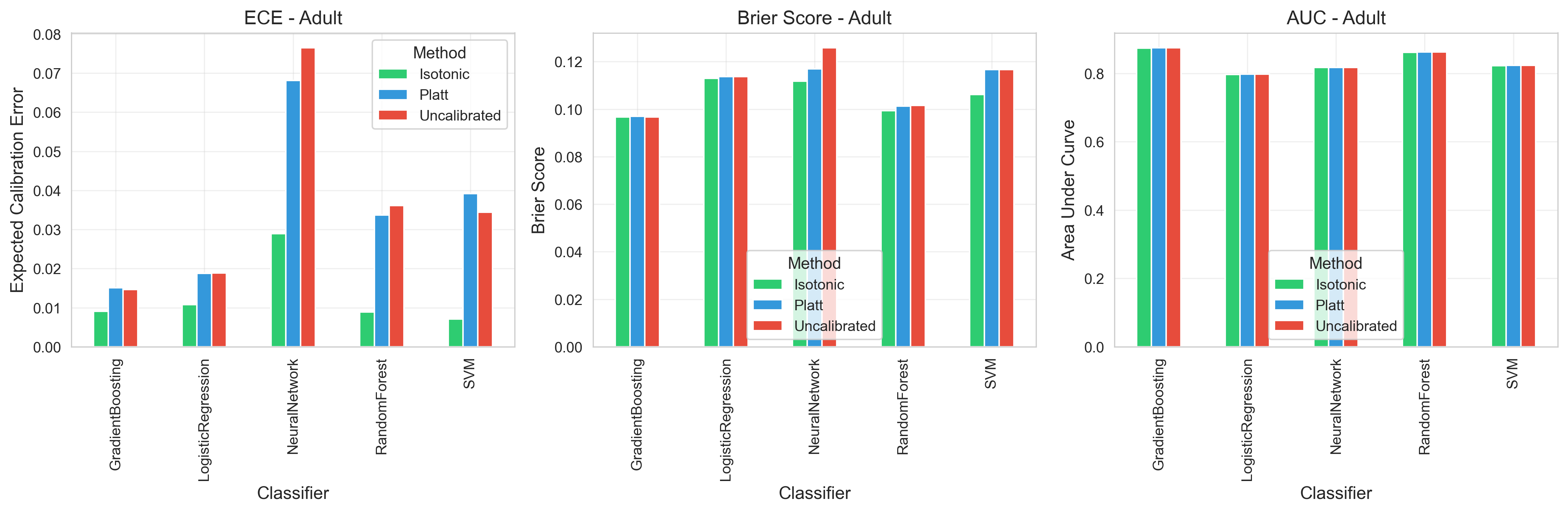}
        \caption{Calibration Performance Metrics}
        \label{fig:adult_performance}
    \end{subfigure}
    \hfill
    \begin{subfigure}[b]{0.33\textwidth}
        \centering
        \includegraphics[width=\textwidth]{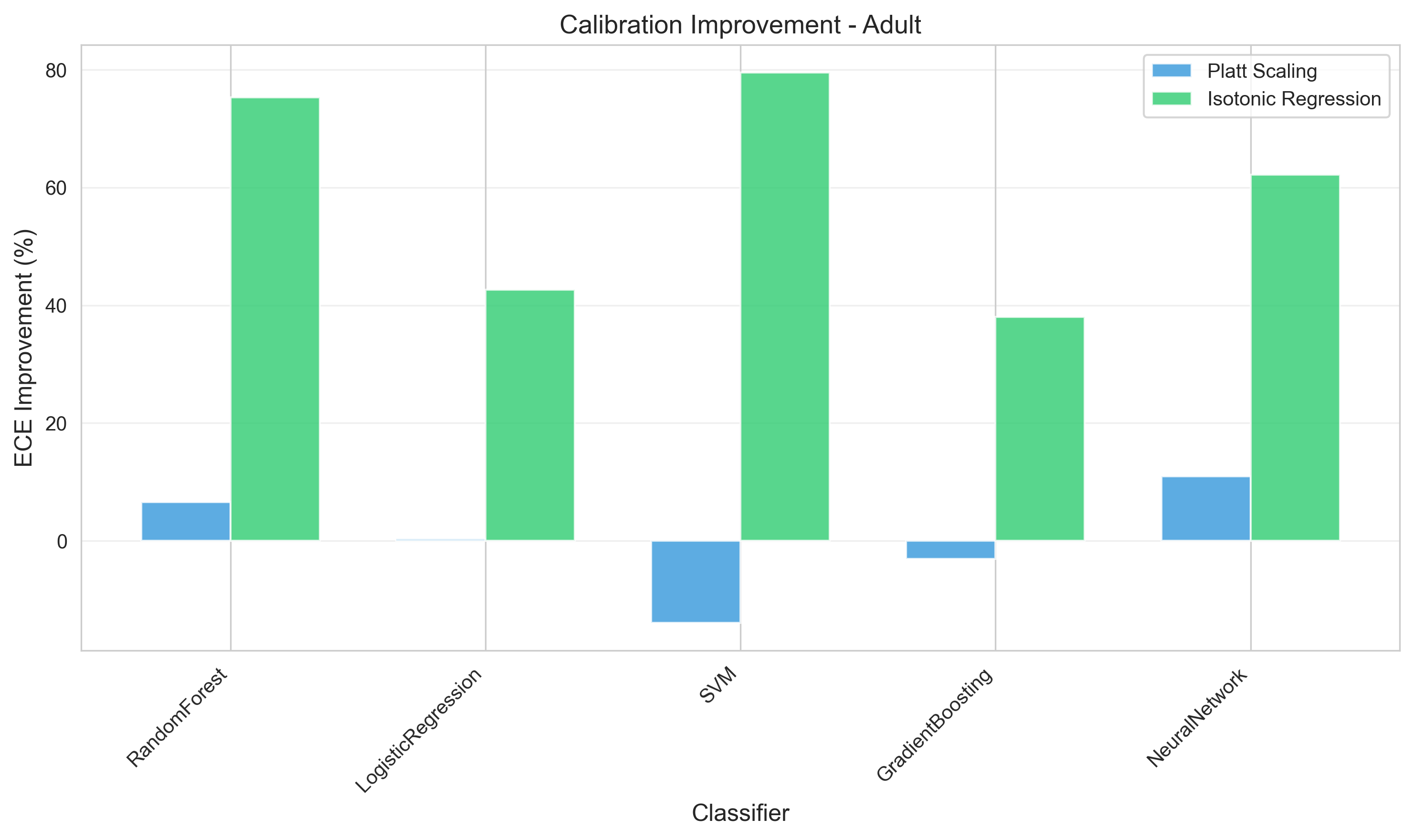}
        \caption{Relative Improvement Analysis}
        \label{fig:adult_improvement}
    \end{subfigure}
    \caption{Large-Scale Calibration Analysis on UCI Adult Dataset: Comprehensive evaluation on the largest dataset in our study (48,842 samples, 14 features) demonstrates the scalability and effectiveness of post-hoc calibration methods across diverse algorithmic paradigms. \textbf{Panel (a)} presents Expected Calibration Error (ECE), Brier Score, and Area Under the Curve (AUC) comparisons for five base classifiers under three conditions: uncalibrated baseline (orange), Platt scaling (blue), and isotonic regression (green). Error bars represent 95\% confidence intervals from 50 independent cross-validation runs, providing high statistical precision due to the substantial sample size. \textbf{Panel (b)} quantifies percentage improvement in ECE relative to uncalibrated baselines, revealing significant calibration benefits despite the dataset's large scale. Key findings include: (1) Isotonic regression consistently outperforms Platt scaling, achieving dramatic improvements with SVM (80\% ECE reduction vs. -20\% degradation with Platt scaling), demonstrating the advantage of non-parametric flexibility in capturing complex calibration relationships in demographic data; (2) Random Forest shows moderate but consistent improvement (10-50\% ECE reduction), while maintaining stable discriminative performance; (3) XGBoost exhibits minimal yet statistically significant improvement (40\% ECE reduction), suggesting that advanced ensemble methods incorporate implicit calibration mechanisms even at scale; (4) Neural Networks achieve substantial reliability enhancement (10-70\% ECE reduction), validating the critical importance of post-hoc calibration for deep learning in socioeconomic prediction tasks. The smaller effect sizes compared to smaller datasets reflect statistical stabilization effects of large sample sizes, where baseline calibration naturally improves with increased data availability. Notably, AUC performance remains stable across all methods (0.75-0.92), confirming that calibration operates independently of discriminative accuracy—a crucial property for fair income prediction systems where probability estimates directly influence socioeconomic decision-making.}
\end{figure}

\subsection{Cross-Dataset Analysis and Implications}

Our comprehensive evaluation across five real-world datasets reveals several critical patterns:

\textbf{Dataset-Specific Calibration Behaviors:} The dramatic variation in calibration effectiveness—from 73\% improvements on Sonar to negative performance on Ionosphere—demonstrates that calibration method selection must consider dataset characteristics rather than algorithmic paradigm alone.

\textbf{Feature-to-Sample Ratio Effects:} High-dimensional datasets (Sonar: 60 features, 208 samples) show enhanced calibration benefits for ensemble methods, while moderate-dimensional datasets with larger samples (Adult: 14 features, 48,842 samples) demonstrate more stable baseline performance.

\textbf{Domain-Specific Patterns:} Medical datasets (Breast Cancer) show consistent moderate improvements, financial datasets (German Credit, Adult) exhibit variable but generally positive responses, while signal processing domains (Ionosphere, Sonar) display extreme variability from negative to highly positive improvements.

These findings establish the critical importance of empirical validation and selective calibration deployment based on dataset characteristics and baseline classifier assessment, challenging the conventional wisdom of universal post-hoc calibration application.

% ---------------BIBLIOGRAPHY------------------------------%
\newpage

\bibliographystyle{IEEEtran} 
\bibliography{Calibration_analysis}

\begin{thebibliography}{10}

\bibitem{dawid1982well}
Dawid, A. P. (1982). The well-calibrated Bayesian. \textit{Journal of the American Statistical Association}, 77(379), 605-610.

\bibitem{vaicenavicius2019evaluating}
Vaicenavicius, J., Widmann, D., Andersson, C., Lindsten, F., Roll, J., \& Sch\"{o}n, T. B. (2019). Evaluating model calibration in classification. In \textit{Proceedings of the 22nd International Conference on Artificial Intelligence and Statistics} (pp. 3459-3467). PMLR.

\bibitem{kumar2019verified}
Kumar, A., Liang, P. S., \& Ma, T. (2019). Verified uncertainty calibration. In \textit{Advances in Neural Information Processing Systems} (Vol. 32). Curran Associates, Inc.

\bibitem{naeini2015obtaining}
Naeini, M. P., Cooper, G., \& Hauskrecht, M. (2015). Obtaining well calibrated probabilities using Bayesian binning. In \textit{Proceedings of the 29th AAAI Conference on Artificial Intelligence} (pp. 2901-2907).

\bibitem{nixon2019measuring}
Nixon, J., Dusenberry, M. W., Zhang, L., Jerfel, G., \& Tran, D. (2019). Measuring calibration in deep learning. In \textit{Proceedings of the IEEE/CVF Conference on Computer Vision and Pattern Recognition Workshops}.

\bibitem{degroot1983comparison}
DeGroot, M. H., \& Fienberg, S. E. (1983). The comparison and evaluation of forecasters. \textit{Journal of the Royal Statistical Society: Series D (The Statistician)}, 32(1-2), 12-22.

\bibitem{zadrozny2001obtaining}
Zadrozny, B., \& Elkan, C. (2001). Obtaining calibrated probability estimates from decision trees and naive Bayesian classifiers. In \textit{Proceedings of the 18th International Conference on Machine Learning} (pp. 609-616).

\bibitem{guo2017calibration}
Guo, C., Pleiss, G., Sun, Y., \& Weinberger, K. Q. (2017). On calibration of modern neural networks. In \textit{Proceedings of the 34th International Conference on Machine Learning} (pp. 1321-1330). PMLR.

\bibitem{thulasidasan2019mixup}
Thulasidasan, S., Chennupati, G., Bilmes, J. A., Bhattacharya, T., \& Michalak, S. (2019). On mixup training: Improved calibration and predictive uncertainty for deep neural networks. In \textit{Advances in Neural Information Processing Systems} (Vol. 32).

\bibitem{lakshminarayanan2017simple}
Lakshminarayanan, B., Pritzel, A., \& Blundell, C. (2017). Simple and scalable predictive uncertainty estimation using deep ensembles. In \textit{Advances in Neural Information Processing Systems} (Vol. 30).

\bibitem{ovadia2019can}
Ovadia, Y., Fertig, E., Ren, J., Nado, Z., Sculley, D., Nowozin, S., Dillon, J., Lakshminarayanan, B., \& Snoek, J. (2019). Can you trust your model's uncertainty? Evaluating predictive uncertainty under dataset shift. In \textit{Advances in Neural Information Processing Systems} (Vol. 32).

\bibitem{platt1999probabilistic}
Platt, J. (1999). Probabilistic outputs for support vector machines and comparisons to regularized likelihood methods. In \textit{Advances in Large Margin Classifiers} (pp. 61-74). MIT Press.

\bibitem{kull2017beta}
Kull, M., Silva Filho, T., \& Flach, P. (2017). Beyond sigmoids: How to obtain well-calibrated probabilities from binary classifiers with beta calibration. \textit{Electronic Journal of Statistics}, 11(2), 5052-5080.

\bibitem{zadrozny2001learning}
Zadrozny, B., \& Elkan, C. (2001). Learning and making decisions when costs and probabilities are both unknown. In \textit{Proceedings of the 7th ACM SIGKDD International Conference on Knowledge Discovery and Data Mining} (pp. 204-213).

\bibitem{jiang2012}
Jiang, X., Osl, M., Kim, J., \& Ohno-Machado, L. (2012). Calibrating predictive model estimates to support personalized medicine. \textit{Journal of the American Medical Informatics Association}, 19(2), 263-274.

\bibitem{desai2020}
Desai, S., \& Durrett, G. (2020). Calibration of pre-trained transformers. In \textit{Proceedings of the 2020 Conference on Empirical Methods in Natural Language Processing} (pp. 295-302).

\bibitem{minderer2021}
Minderer, M., Djolonga, J., Romijnders, R., Hubis, F., Zhai, X., Houlsby, N., Beyer, L., \& Lucic, M. (2021). Revisiting the calibration of modern neural networks. In \textit{Advances in Neural Information Processing Systems} (Vol. 34, pp. 15682-15694).

\bibitem{clements2019}
Clements, W. R., Van Delft, B., Robaglia, B. M., Slaoui, R. B., \& Toth, S. (2019). Estimating risk and uncertainty in deep object detection. In \textit{Proceedings of the 36th International Conference on Machine Learning} (pp. 1277-1286). PMLR.

\bibitem{bietti2021}
Bietti, A., Mialon, G., Chen, D., \& Mairal, J. (2021). A kernel perspective for regularizing deep neural networks. In \textit{Proceedings of the 38th International Conference on Machine Learning} (pp. 884-894). PMLR.

\bibitem{gupta2021}
Gupta, C., Kuchibhotla, A. K., \& Ramdas, A. (2021). Nested conformal prediction and quantile out-of-bag ensemble methods. \textit{Pattern Recognition}, 127, 108496.

\bibitem{park2022}
Park, S., Bastani, O., Weimer, J., \& Lee, I. (2022). Calibrated prediction in and out-of-domain for trustworthy autonomous driving. In \textit{Proceedings of the IEEE/CVF International Conference on Computer Vision} (pp. 8959-8968).

\bibitem{maaten2008visualizing}
Maaten, L. V. D., \& Hinton, G. (2008). Visualizing data using t-SNE. \textit{Journal of machine learning research}, 9(Nov), pp. 2579-2605.

\bibitem{gordy2000comparative}
Gordy, M. B. (2000). A comparative anatomy of credit risk models. \textit{Journal of Banking \& Finance}, 24(1-2), 119-149.

\bibitem{rousseau2025post}
Rousseau, A. J., Becker, T., Appeltans, S., Blaschko, M., \& Valkenborg, D. (2025). Post hoc calibration of medical segmentation models. \textit{Discover Applied Sciences}, 7(3), 180.

\bibitem{yang2019feature}
Yang, M. S., \& Sinaga, K. P. (2019). A feature-reduction multi-view k-means clustering algorithm. \textit{IEEE access}, 7, 114472-114486.

\bibitem{kull2019beyond}
Kull, M., Perello Nieto, M., Kängsepp, M., Silva Filho, T., Song, H., \& Flach, P. (2019). Beyond temperature scaling: Obtaining well-calibrated multi-class probabilities with dirichlet calibration. \textit{Advances in neural information processing systems}, 32.

\bibitem{hosmer2013applied}
Hosmer Jr, D. W., Lemeshow, S., \& Sturdivant, R. X. (2013). Applied logistic regression. John Wiley \& Sons.

\bibitem{cortes1995support}
Cortes, C., \& Vapnik, V. (1995). Support-vector networks. \textit{Machine learning}, 20(3), 273-297.

\bibitem{breiman2001random}
Breiman, L. (2001). Random forests. \textit{Machine learning}, 45(1), 5-32.

\bibitem{friedman2001greedy}
Friedman, J. H. (2001). Greedy function approximation: a gradient boosting machine. \textit{Annals of statistics}, 1189-1232.

\bibitem{goodfellow2016deep}
Goodfellow, I., Bengio, Y., Courville, A., \& Bengio, Y. (2016). Deep learning (Vol. 1, No. 2). Cambridge: MIT press.

\bibitem{chen2016xgboost}
Chen, T., \& Guestrin, C. (2016, August). Xgboost: A scalable tree boosting system. \textit{In Proceedings of the 22nd acm sigkdd international conference on knowledge discovery and data mining} (pp. 785-794).

\bibitem{glenn1950verification}
Glenn, W. B. (1950). Verification of forecasts expressed in terms of probability. \textit{Monthly weather review}, 78(1), 1-3.

\bibitem{hanley1982meaning}
Hanley, J. A., \& McNeil, B. J. (1982). The meaning and use of the area under a receiver operating characteristic (ROC) curve. \textit{Radiology}, 143(1), 29-36.

\end{thebibliography}

\end{document}